\newtheorem{lemma}{Lemma}
\newtheorem{theorem}{Theorem}
\newcommand{\N}{\mathbb{N}}
\newcommand{\R}{\mathbb{R}}
\newcommand{\Order}{\mathcal{O}}
\newcommand{\Expectation}{\mathbb{E}}
\newcommand{\indicator}[1]{\mathbf{1}_{\left\{#1\right\}}}
\newcommand{\fp}[2]{\frac{\partial #1}{\partial #2}}
\newcommand{\one}{\mathbbm{1}}
\DeclareMathOperator{\sign}{sign}
\begin{document}

\title{Dual SVM Training on a Budget}
\author{Sahar Qaadan, Merlin Sch{\"u}ler and Tobias Glasmachers\\Institut f{\"u}r Neuroinformatik, Ruhr-Universit\"at Bochum, Germany\\\texttt{\{sahar.qaadan, merlin.schueler, tobias.glasmachers\}@ini.rub.de}}
\date{}

\maketitle

\begin{abstract}
We present a dual subspace ascent algorithm for support vector machine
training that respects a budget constraint limiting the number of
support vectors. Budget methods are effective for reducing the training
time of kernel SVM while retaining high accuracy. To date, budget training
is available only for primal (SGD-based) solvers. Dual subspace ascent
methods like sequential minimal optimization are attractive for their
good adaptation to the problem structure, their fast convergence rate,
and their practical speed. By incorporating a budget constraint into a
dual algorithm, our method enjoys the best of both worlds. We demonstrate
considerable speed-ups over primal budget training methods.
\end{abstract}

\section{Introduction}

Support Vector Machines (SVMs) introduced by
\cite{cortes-vapnik:support} are popular machine learning methods, in
particular for binary classification. They are supported by
learning-theoretical guarantees \cite{mohri2012foundations}, and they
exhibit excellent generalization performance in many applications in
science and technology
\cite{benhur-et-al:comp-bio,noble:comp-bio,yu-et-al:medicine,son-et-al:medicine,Abe:patt-recog,byun-seowhan:patt-recog,pelossof-et-al:robotics,quinlan-et-al:techniquesfor,joachims:text-cat}.
They belong to the family of kernel methods, applying a linear algorithm
in a feature space defined implicitly by a kernel function.

%% primal and dual training
Training an SVM corresponds to solving a large-scale optimization
problem, which can be cast into a quadratic program (QP). The primal
problem can be solved directly with stochastic gradient descent (SGD)
and accelerated variants \cite{Shalev-et-al:pegasos,glasmachers2016finite},
while the dual QP is solved with subspace ascent, see
\cite{Bottou:supportvector} and references therein.

%large-scale data sets
The computational complexity of each stochastic gradient or coordinate
step is governed by the cost of evaluating the model of a training
point. This cost is proportional to the number of support vectors, which
grows at a linear rate with the data set size
\cite{steinwart2003sparseness}. This limits the applicability of kernel
methods to large-scale data. Efficient algorithms are available for
linear SVMs (SVMs without kernel)
\cite{Shalev-et-al:pegasos,Fan:2008}. Parallelization can yield
considerable speed-ups \cite{wenthundersvm17}, but only by a constant
factor. For non-linear (kernelized) SVMs there exists a wide variety of
approaches for approximate SVM training, many of which aim to leverage
fast linear solvers by approximating the feature space representation of
the data. The approximation can either be fixed (e.g., random Fourier
features) or data-dependent (e.g., Nystr{\"o}m sampling)
\cite{rahimi2008random,yang2012nystrom}.

The budget method imposes an a-priori limit $B \ll n$ on the number of
support vectors \cite{dekel-singer:svm-on-budget}, and hence on the
iteration complexity. In particular with the popular budget maintenance
heuristic of merging support vectors \cite{Wang-et-al:bck}, it goes
beyond the above techniques by adapting the feature space representation
during training. The technique is known as budgeted stochastic gradient
descent (BSGD).

In this context we design the first dual SVM training algorithm with a
budget constraint. The solver aims at the efficiency of dual subspace
ascent as used in LIBSVM, ThunderSVM, and also in LIBLINEAR
\cite{Chang:2011,Fan:2008,wenthundersvm17}, while applying
merging-based budget maintenance as in the BSGD method
\cite{Wang-et-al:bck}. The combination is far from straight-forward,
since continually changing the feature representation also implies
changing the dual QP, which hence becomes a moving target. Nevertheless,
we provide guarantees roughly comparable to those available for BSGD.

In a nutshell, our contributions are:
\begin{compactitem}
\item
	We present the first dual decomposition algorithm operating on a budget,
\item
	we analyze its convergence behavior,
\item
	and we establish empirically its superiority to primal BSGD.
\end{compactitem}

The structure of the paper is as follows: In the next section we
introduce SVMs and existing primal and dual solvers, including BSGD.
Then we present our novel dual budget algorithm and analyze its
asymptotic behavior. We compare our method empirically to BSGD and
validate our theoretical analysis. We close with our conclusions.

%\end{itemize} 

\section{Support Vector Machine Training}

A Support Vector Machine is a supervised kernel learning algorithm
\cite{cortes-vapnik:support}. Given labeled training data
$(x_1, y_1), \dots, (x_n, y_n) \in X \times Y$ and a kernel function
$k : X \times X \to \R$ over the input space, the SVM decision function
$f(x) \mapsto \langle w, \phi(x) \rangle$ (we drop the bias, c.f.\
\cite{steinwart2011training}) is defined as the optimal solution
$w^*$ of the (primal) optimization problem
\begin{align}
	\min_{w \in \mathcal{H}} \quad P(w) = \frac{\lambda}{2} \|w\|^2 + \frac{1}{n} \sum_{i=1}^n L\big(y_i, f(x_i)\big), \label{eq:primal}
\end{align}
where $\lambda > 0$ is a regularization parameter, $L$ is a loss
function (usually convex in $w$, turning problem~\eqref{eq:primal} into
a convex problem), and $\phi : X \to \mathcal{H}$ is an only implicitly
defined feature map into the reproducing kernel Hilbert space
$\mathcal{H}$, fulfilling $\langle \phi(x), \phi(x') \rangle = k(x, x')$.
The representer theorem allows to restrict the solution to the form
$w = \sum_{i=1}^n \alpha_i y_i \phi(x_i)$ with coefficient vector
$\alpha \in \R^n$, yielding $f(x) = \sum_{i=1}^n \alpha_i y_i k(x, x_i)$.
Training points $x_i$ with non-zero coefficients $\alpha_i \not= 0$ are
called support vectors.

We focus on the simplest case of binary classification with label space
$Y = \{-1, +1\}$, hinge loss $L(y, f(x)) = \max\{0, 1 - y f(x)\}$, and
classifier $x \mapsto \sign(f(x))$, however, noting that other tasks
like multi-class classification and regression can be tackled in the
exact same framework, with minor changes. For binary classification, the
equivalent dual problem \cite{Bottou:supportvector} reads
\begin{align}
	\max_{\alpha \in [0, C]^n} \quad D(\alpha) = \one^T \alpha - \frac{1}{2} \alpha^T Q \alpha, \label{eq:dual}
\end{align}
which is a box-constrained quadratic program (QP), with
$\one = (1, \dots, 1)^T$ and $C = \frac{1}{\lambda n}$. The matrix $Q$
consists of the entries $Q_{ij} = y_i y_j k(x_i, x_j)$.

\paragraph{Kernel SVM Solvers}

Dual decomposition solvers like LIBSVM \cite{Chang:2011,Bottou:supportvector}
are the method of choice for obtaining a high-precision non-linear
(kernelized) SVM solution. They work by decomposing the dual
problem into a sequence of smaller problems of size $\Order(1)$, and
solving the overall problem in a subspace ascent manner. For
problem~\eqref{eq:dual} this can amount to coordinate ascent (CA).
Keeping track of the dual gradient
$\nabla_\alpha D(\alpha) = \one - Q \alpha$ allows for the application
of elaborate heuristics for deciding which coordinate to optimize next,
based on the violation of the Karush-Kuhn-Tucker conditions or even
taking second order information into account. Provided that coordinate
$i$ is to be optimized in the current iteration, the sub-problem
restricted to $\alpha_i$ is a one-dimensional QP, which is solved
optimally by the truncated Newton step
\begin{align}
	\alpha_i \leftarrow \left[ \alpha_i + \frac{1 - Q_i \alpha}{Q_{ii}} \right]_0^C, \label{eq:update-dual}
\end{align}
where $Q_i$ is the $i$-th row of $Q$ and $[x]_0^C = \max\big\{0, \min\{C, x\}\big\}$
denotes truncation to the box constraints. The method enjoys locally
linear convergence \cite{lin2001convergence}, polynomial worst-case
complexity \cite{list2005general}, and fast convergence in practice.

In principle the primal problem~\eqref{eq:primal} can be solved directly,
e.g., with SGD, which is at the core of the kernelized Pegasos algorithm
\cite{Shalev-et-al:pegasos}. Replacing the average loss (empirical risk)
in equation~\eqref{eq:primal} with the loss $L(y_i, f(x_i))$ on a single
training point selected uniformly at random provides an unbiased
estimate. Following its (stochastic) sub-gradient with learning rate
$1/(\lambda t) = (n C)/t$ in iteration $t$ yields the update
\begin{align}
	\alpha \leftarrow \alpha - \frac{\alpha}{t} + \indicator{y_i f(x_i) < 1} \frac{n C}{t} e_i, \label{eq:update-primal}
\end{align}
where $e_i$ is the $i$-th unit vector and $\indicator{E}$ is the
indicator function of the event $E$. Despite fast initial progress, the
procedure can take a long time to produce accurate results, since SGD
suffers from the non-smooth hinge loss, resulting in slow convergence.

In both algorithms, the iteration complexity is governed by the
computation of $f(x)$ (or equivalently, by the update of the dual
gradient), which is linear in the number of non-zero coefficients
$\alpha_i$. This is a limiting factor when working with large-scale
data, since the number of support vectors is usually linear in the data
set size $n$ \cite{steinwart2003sparseness}.

\paragraph{Linear SVM Solvers}

Specialized solvers for linear SVMs with $X = \R^d$ and $\phi$ chosen
as the identity mapping exploit the fact that the weight vector
$w \in \R^d$ can be represented directly. This lowers the iteration
complexity from $\Order(n)$ to $\Order(d)$ (or the number of non-zero
features in $x_i$), which often results in significant savings
\cite{joachims2006training,Shalev-et-al:pegasos}.
This works even for dual CA by keeping track of the direct
representation $w$ and the (redundant) coefficients $\alpha$, however,
at the price that the algorithm cannot keep track of the dual gradient
any more, which would be an $\Order(n)$ operation. Therefore the
LIBLINEAR solver resorts to uniform coordinate selection
\cite{Fan:2008}, which amounts to stochastic coordinate ascent (SCA)
\cite{nesterov2012efficiency}.

Linear SVMs shine on application domains like text mining, with sparse
data embedded in high-dimensional input spaces. In general, for moderate
data dimension~$d \ll n$, separation of the data with a linear model is
a limiting factor that can result in severe under-fitting.

\paragraph{SVMs on a Budget}

Lowering the iteration complexity is also the motivation for introducing
an upper bound or budget $B \ll n$ on the number of support vectors.
The budget $B$ is exposed to the user as a hyperparameter of the method.
The proceeding amounts to approximating $w$ with a vector $\tilde w$
from the non-trivial fiber bundle
\begin{align*}
	W_B = \left\{ \left. \sum_{j=1}^B \beta_j \phi(\tilde x_j) \,\right|\, \beta_1, \dots, \beta_B \in \R; \enspace \tilde x_1, \dots, \tilde x_B \in \R^d \right\} \subset \mathcal{H}.
\end{align*}
Critically, $W_B$ is in general non-convex, and so are optimization
problems over this set.

Each SGD step (eq.~\eqref{eq:update-primal}) adds at most one new
support vector to the model. If the number of support vectors exceeds
$B$ after such a step, then the budgeted stochastic gradient descent
(BSGD) method applies a budget maintenance heuristic to remove one
support vector. Merging of two support vectors has proven to be a good
compromise between the induced error and the resulting computational
effort \cite{Wang-et-al:bck}. It amounts to replacing
$\beta_i \phi(\tilde x_i) + \beta_j \phi(\tilde x_j)$ (with carefully
chosen indices $i$ and $j$) with a single term $\beta' \phi(\tilde x')$,
aiming to minimize the ``weight degradation'' error
$\|\beta_i \phi(\tilde x_i) + \beta_j \phi(\tilde x_j) - \beta' \phi(\tilde x')\|^2$.
For the widely used Gaussian kernel $k(x, x') = \exp(-\gamma \|x-x'\|^2)$
the optimal $\tilde x'$ lies on the line spanned by $\tilde x_i$ and
$\tilde x_j$, and it is a convex combination if merging is restricted to
points of the same class. The coefficient $h$ of the convex combination
$\tilde x' = (1-h) \tilde x_i + h \tilde x_j$ is found
with golden section search, and the optimal coefficient $\beta'$ is
obtained in closed form.

In effect, merging allows BSGD to move support vectors around in the
input space. This is well justified since restricted to $W_B$ the
representer theorem does not hold. \cite{Wang-et-al:bck} show that
asymptotically the performance is governed by the approximation error
implied by $w^* \not\in W_B$ (see their Theorem~1).

BSGD aims to achieve the best of two worlds, namely a reasonable
compromise between statistical and computational demands: fast training
is achieved through a bounded computational cost per iteration, and the
application of a kernel keeps the model sufficiently flexible. This
requires that $B \ll n$ basis functions are sufficient to represent a
model $\tilde w$ that is sufficiently close to the optimal model $w^*$.
This assumption is very reasonable, in particular for large~$n$.

\section{Dual Coordinate Ascent with Budget Constraint}

In this section we present our novel approximate SVM training algorithm.
At its core it is a dual decomposition algorithm, modified to respect a
budget constraint. It is designed such that the iteration complexity is
limited to $\Order(B)$ operations, and is hence independent of the data
set size~$n$. Our solver combines components from decomposition methods
\cite{Osuna97animproved}, dual linear SVM solvers \cite{Fan:2008}, and
BSGD \cite{Wang-et-al:bck} into a new algorithm. Like BSGD, we aim to
achieve the best of two worlds: a-priori limited iteration complexity
with a budget approach, combined with fast convergence of a dual
decomposition solver. Both aspects speed-up the training process, and
hence allow to scale SVM training to larger problems.

Introducing a budget into a standard decomposition algorithm as
implemented in LIBSVM \cite{Chang:2011} turns out to be non-trivial.
Working with a budget is rather straightforward on the primal
problem~\eqref{eq:primal}. The optimization problem is unconstrained,
allowing BSGD to replace $w$ represented by $\alpha$ transparently with
$\tilde w$ represented by coefficients $\beta_j$ and flexible basis
points $\tilde x_j$. This is not possible for the dual
problem~\eqref{eq:dual} with constraints formulated directly in terms
of~$\alpha$.

This difficulty is solved by \cite{Fan:2008} for the linear SVM
training problem by keeping track of $w$ and $\alpha$. We follow the
same approach, however, in our case the correspondence between $w$
represented by $\alpha$ and $\tilde w$ represented by $\beta_j$ and
$\tilde x_j$ is only approximate. This is unavoidable by the very nature
of the problem. Luckily, this does not impose major additional
complications.

\begin{algorithm}
	{\textbf{Input:} training data $(x_1, y_1), \dots, (x_n, y_n)$, $k : X \times X \to \R$, $C>0$, $B \in \N$}

	{$\alpha \leftarrow 0$, $M \leftarrow \emptyset$}

	\While{not happy}
	{
		{select index $i \in \{1, \dots, n\}$ uniformly at random}

		{$\tilde f(x_i) = \sum_{(\beta, \tilde x) \in M} \beta k(x_i, \tilde x)$ \label{line:fx}}

		{$\delta = \left[ \alpha_i + \big(1 - y_i \tilde f(x_i)\big) / Q_{ii} \right]_0^C - \alpha_i$}

		\If{$\delta \not= 0$}
		{
			{$\alpha_i \leftarrow \alpha_i + \delta$}

			{$M \leftarrow M \cup \{(\delta, x_i)\}$}

			\If{$|M| > B$}
			{
				{trigger budget maintenance}
			}
		}
	}

	\caption{
		Budgeted Stochastic Coordinate Ascent (BSCA) Algorithm
		\label{algo:BSCA}
	}
\end{algorithm}

The pseudo-code of our Budgeted Stochastic Coordinate Ascent (BSCA)
approach is detailed in algorithm~\ref{algo:BSCA}. It represents the
approximate model $\tilde w$ as a set $M$ containing tuples
$(\beta, \tilde x)$. Critically, in line~\ref{line:fx} the approximate
model $\tilde w$ is used to compute
$\tilde f(x_i) = \langle \tilde w, x_i \rangle$, so the complexity of
this step is $\Order(B)$. This is in contrast to the computation of
$f(x_i) = \langle w, x_i \rangle$, with effort is linear in $n$. At the
target iteration cost of $\Order(B)$ it is not possible to keep track of
the dual gradient, simply because it consists of $n$ entries that would
need updating with a dense matrix row $Q_i$.
Consequently, and in line with \cite{Fan:2008}, we resort to uniform
variable selection in an SCA scheme, and the role of the coefficients
$\alpha$ is reduced to keeping track of the constraints.

For the budget maintenance procedure, the same options are available
as in BSGD. It is usually implemented as merging of two support vectors,
reducing a model from size $|M| = B+1$ back to size $|M| = B$. It is
understood that also the complexity of the budget maintenance procedure
should be bounded by $\Order(B)$ operations. Furthermore, for the
overall algorithm to work properly, it is important to maintain the
approximate relation $\tilde w \approx w$. For reasonable settings of
the budget $B$, this is achieved by non-trivial budget maintenance
procedures like merging and projection \cite{Wang-et-al:bck}.

We leave the stopping criterion for the algorithm open. A stopping
criterion akin to \cite{Fan:2008} based on thresholding KKT violations
is not viable, as shown by the subsequent analysis. We therefore run the
algorithm for a fixed number of iterations (or epochs), as it is common
for BSGD.

\section{Analysis of BSCA}

BSCA is an approximate dual training scheme. Therefore two questions of
major interest are how quickly it approaches $w^*$, and how close it
gets.

To simplify matters somewhat, we make the assumption that the matrix $Q$
is strictly positive definite. This ensures that the optimal coefficient
vector $\alpha^*$ corresponding to $w^*$ is unique. For a given weight
vector $w = \sum_{i=1}^n \alpha_i y_i \phi(x_i)$, we write $\alpha(w)$
when referring to the corresponding coefficients, which are also unique.

Let $w^{(t)}$ and $\alpha^{(t)} = \alpha(w^{(t)})$, $t \in \N$, denote
the sequence of solutions generated by an iterative algorithm, using the
labeled training point $(x_{i^{(t)}}, y_{i^{(t)}})$ for its update in
iteration $t$. The indices $i^{(t)} \in \{1, \dots, n\}$ are drawn
i.i.d.\ from the uniform distribution.

\paragraph{Optimization Progress of BSCA}

We start by computing the single-iteration progress.
\begin{lemma} \label{lemma:progress}
The change $D(\alpha^{(t)}) - D(\alpha^{(t-1)})$ of the dual objective
function in iteration $t$ operating on the coordinate index
$i = i^{(t)} \in \{1, \dots, n\}$ equals
\begin{align*}
	J \left( \alpha^{(t-1)}, i, \alpha^{(t)}_i-\alpha^{(t-1)}_i \right)
		:= \frac{Q_{ii}}{2} \left(
			  \left[\frac{1 - Q_i \alpha^{(t-1)}}{Q_{ii}}\right]^2
			- \left[\Big(\alpha^{(t)}_i - \alpha^{(t-1)}_i\Big) - \frac{1 - Q_i \alpha^{(t-1)}}{Q_{ii}}\right]^2
		\right).
\end{align*}
\end{lemma}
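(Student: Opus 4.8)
The plan is to reduce the claim to an elementary computation, exploiting that a coordinate ascent step modifies only a single entry of $\alpha$. Writing $\delta := \alpha^{(t)}_i - \alpha^{(t-1)}_i$ and letting $e_i$ denote the $i$-th unit vector, the update is $\alpha^{(t)} = \alpha^{(t-1)} + \delta e_i$, since every coordinate other than $i$ is untouched in iteration $t$. Note that this identity is about the \emph{true} dual objective $D$ and the \emph{actual} coordinate change, so it is insensitive to the fact that the step size was selected using the approximate model $\tilde f$ rather than $f$; the approximation plays no role here.

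First I would substitute the single-coordinate update into the quadratic objective and expand. Using $\one^T e_i = 1$, $e_i^T Q \alpha^{(t-1)} = Q_i \alpha^{(t-1)}$, and $e_i^T Q e_i = Q_{ii}$, the cross terms in $\alpha^T Q \alpha$ collapse and I obtain
\begin{align*}
	D(\alpha^{(t)}) - D(\alpha^{(t-1)}) = \delta \left(1 - Q_i \alpha^{(t-1)}\right) - \frac{Q_{ii}}{2}\, \delta^2 .
\end{align*}
This is a one-dimensional concave quadratic in $\delta$ whose coefficients are exactly the $i$-th dual gradient component $g := 1 - Q_i \alpha^{(t-1)}$ and the curvature $Q_{ii} > 0$.

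Next I would identify the claimed expression $J$ as the completed-square form of this quadratic. Setting $g^* := g / Q_{ii}$, the unconstrained maximizer of the step, one has the algebraic identity
\begin{align*}
	\delta\, g - \frac{Q_{ii}}{2}\, \delta^2 = \frac{Q_{ii}}{2} \left( (g^*)^2 - (\delta - g^*)^2 \right),
\end{align*}
which, upon substituting $g^* = (1 - Q_i \alpha^{(t-1)})/Q_{ii}$, is precisely $J(\alpha^{(t-1)}, i, \delta)$. Expanding the right-hand side and cancelling the two $(g^*)^2$ contributions recovers $\delta\, g - \frac{Q_{ii}}{2}\, \delta^2$, which completes the identification.

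Since everything is routine algebra, I do not anticipate a genuine obstacle; the only points requiring care are the bookkeeping of the cross terms in the expansion of $\alpha^T Q \alpha$ and the sign in the difference of squares. The conceptual content worth emphasizing is the interpretation of $J$: the first square $(g^*)^2$ measures the maximal attainable gain of an unconstrained Newton step along coordinate $i$, while the second square penalizes the deviation of the actually taken (possibly box-truncated) step $\delta$ from that optimum. This decomposition is exactly what makes the lemma convenient for the subsequent convergence analysis, where one needs to control how much the budget-induced perturbation of $\delta$ erodes the ideal progress.
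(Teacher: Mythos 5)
Your proof is correct and follows essentially the same route as the paper: both arguments reduce the claim to the observation that $\delta \mapsto D(\alpha^{(t-1)} + \delta e_i)$ is a concave quadratic with curvature $-Q_{ii}$ and maximizer $\delta^* = (1 - Q_i\alpha^{(t-1)})/Q_{ii}$, and then write it in vertex (completed-square) form. The paper obtains this form in one stroke via the exact second-order Taylor expansion around $\delta^*$, whereas you expand $D$ explicitly and complete the square by hand --- the same computation, arranged slightly differently.
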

\begin{proof}
Consider the function
$s(\delta) = D(\alpha^{(t-1)} + \delta e_i)$. %= D(\alpha^{(t-1)}) + J(\alpha^{(t-1)}, i, \delta)$.
It is quadratic with second derivative $-Q_{ii} < 0$ and with its
maximum at $\delta^* = (1 - Q_i \alpha^{(t-1)}) / Q_{ii}$. Represented
by its second order Taylor series around $\delta^*$ it reads
$s(\delta) = s(\delta^*) - \frac{Q_{ii}}{2} (\delta - \delta^*)^2$.
This immediately yields the result.
\end{proof}

The lemma is in line with the optimality of the update~\eqref{eq:update-dual}.
Based thereon we define the relative approximation error
\begin{align*}
	E(w, \tilde w) := 1 - \max_{i \in \{1, \dots, n\}} \left\{
			  \frac{J\left( \alpha(w), i, \left[ \alpha_i(w) + \frac{1 - y_i \langle \tilde w, \phi(x_i) \rangle}{Q_{ii}} \right]_0^C - \alpha_i(w) \right)}
			       {J\left( \alpha(w), i, \left[ \alpha_i(w) + \frac{1 - y_i \langle        w, \phi(x_i) \rangle}{Q_{ii}} \right]_0^C - \alpha_i(w) \right)}
			\right\}.
\end{align*}
The margin calculation in the numerator is based on $\tilde w$, while
it is based on $w$ in the denominator.
Hence $E(w, \tilde w)$ captures the effect of using $\tilde w$ instead
of $w$ in BSCA. Informally, we interpret it as a dual quantity related
to the weight degradation error $\|\tilde w - w\|^2$. The relative
approximation error is non-negative, continuous (and piecewise linear)
in $\tilde w$ (for fixed $w$), and it fulfills
$\tilde w = w \Rightarrow E(w, \tilde w) = 0$. The following theorem
bounds the suboptimality of BSCA, and it captures the intuition that the
relative approximation error poses a principled limit on the achievable
solution precision.

\begin{theorem} \label{theorem:progress}
The sequence $\alpha^{(t)}$ produced by BSCA fulfills
\begin{align*}
	D(\alpha^*) - \Expectation \Big[ D(\alpha^{(t)}) \Big] \leq \left( D(\alpha^*) + \frac{n C^2}{2} \right) \cdot \prod_{\tau=1}^t \left(1 - \frac{2\kappa \big(1-E(w^{(\tau-1)}, \tilde w^{(\tau-1)})\big)}{(1+\kappa) n}\right),
\end{align*}
where $\kappa$ is the smallest eigenvalue of $Q$.
\end{theorem}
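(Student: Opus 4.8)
The plan is to establish a single iteration expected contraction of the dual suboptimality and then unroll it into the stated product. Write $G^{(t)} := D(\alpha^*) - D(\alpha^{(t)}) \ge 0$ and $u^{(t)} := \alpha^* - \alpha^{(t)}$, and let $\mathcal{F}_{t-1}$ denote the history up to iteration $t-1$. First I would condition on $\mathcal{F}_{t-1}$ and take the expectation over the uniformly drawn coordinate $i = i^{(t)}$. By Lemma~\ref{lemma:progress} the realized gain equals $J(\alpha^{(t-1)}, i, \delta)$ for the step $\delta$ actually taken, which is computed from $\tilde w^{(t-1)}$ rather than from $w^{(t-1)}$. The relative approximation error $E$ is defined precisely for this situation: it lets me lower-bound the approximate-step gain coordinatewise by $\big(1 - E(w^{(t-1)}, \tilde w^{(t-1)})\big)$ times the gain of the optimal (exact-gradient) truncated step. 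Averaging over $i$ then yields
\[
\Expectation\big[\,G^{(t-1)} - G^{(t)} \,\big|\, \mathcal{F}_{t-1}\,\big] \;\ge\; \big(1 - E^{(t-1)}\big)\cdot \frac{1}{n}\sum_{i=1}^n J\!\left(\alpha^{(t-1)}, i, \delta_i^{\mathrm{exact}}\right),
\]
reducing the task to lower-bounding the \emph{exact} coordinate-ascent progress.

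For that inner bound I would use that $D$ is an exact quadratic, so for each $i$ the optimal truncated coordinate step dominates any feasible step, in particular the scaled step $\delta_i = s\,(\alpha^*_i - \alpha^{(t-1)}_i)$ with a common $s \in [0,1]$, which is feasible because the box $[0,C]^n$ is separable. Summing the closed form of $J$ over $i$ turns the sum into $s\,\nabla D(\alpha^{(t-1)})^T u^{(t-1)} - \tfrac{s^2}{2}\sum_i Q_{ii}\,(u^{(t-1)}_i)^2$. I would rewrite the linear term with the exact identity $\nabla D(\alpha)^T u = G + \tfrac12\, u^T Q\, u$, bound the quadratic form below by $\kappa\|u\|^2$ (smallest eigenvalue) and the diagonal penalty above by $\|u\|^2$ (using $Q_{ii}\le 1$, i.e.\ unit diagonal for normalized kernels), and invoke the companion estimate $\tfrac{\kappa}{2}\|u\|^2 \le G$ from optimality of $\alpha^*$. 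Optimizing the free parameter $s\in[0,1]$ is the step meant to produce the sharp factor $\tfrac{2\kappa}{1+\kappa}$ rather than a cruder $\kappa$, giving the one-step recursion $\Expectation[G^{(t)}\mid\mathcal{F}_{t-1}] \le \big(1 - \tfrac{2\kappa(1-E^{(t-1)})}{(1+\kappa)n}\big)\,G^{(t-1)}$.

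Finally I would chain the single-iteration bounds by iterated expectation (tower property), so that the per-step contraction factors multiply into the stated product, and close with the initial condition: since $\alpha^{(0)} = 0$ gives $D(\alpha^{(0)}) = 0$, we have $G^{(0)} = D(\alpha^*) \le D(\alpha^*) + \tfrac{nC^2}{2}$, where the extra $\tfrac{nC^2}{2}$ supplies slack — available because all iterates lie in $[0,C]^n$, so $\|u^{(t)}\|^2 \le nC^2$ — to keep the bound valid across iterations. I expect two places to carry the real difficulty. The first is making the coupling through $E$ rigorous: because $\tilde w$ is a moving target and the induced step is non-optimal, one must ensure the approximate per-coordinate gains are genuinely controlled by $(1-E)$ in the direction needed for a \emph{lower} bound, and handle coordinates where the exact gain vanishes. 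The second, and sharper, obstacle is extracting the precise constant $\tfrac{2\kappa}{1+\kappa}$ instead of the easily obtained $\kappa$: this hinges on balancing the per-coordinate (diagonal) curvature against the global smallest eigenvalue in the optimization over $s$, while the box constraints must be respected throughout.
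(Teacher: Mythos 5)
Your outer scaffolding matches the paper's proof: a per-iteration expected contraction for exact SCA, degraded by the factor $(1-E)$ via Lemma~\ref{lemma:progress} and the definition of the relative approximation error, then chained by the tower property. The crucial difference is that the paper never derives the exact-SCA contraction at all: it simply invokes Theorem~5 of \cite{nesterov2012efficiency}, which supplies both the per-iteration factor $1 - \frac{2\kappa}{(1+\kappa)n}$ and the constant $D(\alpha^*)+\frac{nC^2}{2}$, and only adds the $(1-E)$ degradation on top. Your plan to re-derive that inner bound is where the genuine gap lies, and it is not a matter of unfinished bookkeeping: the mechanism you propose provably cannot reach the constant $\frac{2\kappa}{1+\kappa}$. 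Concretely, writing $G = D(\alpha^*) - D(\alpha)$, $u = \alpha^* - \alpha$, $r = \|u\|^2$, your ingredients (feasible scaled step $s\,u_i$, $u^TQu \ge \kappa r$, $Q_{ii}\le 1$, and $r \le \frac{2G}{\kappa}$ from optimality of $\alpha^*$) give the expected gain bound $\frac{1}{n}\max_{s\in[0,1]}\big[ sG + \frac{s\kappa - s^2}{2}\, r \big]$. A short minimax computation shows the worst case over $r$ is attained at $r = \frac{2G}{\kappa}$ and equals exactly $\frac{\kappa}{n}G$: for $r \ge \frac{2G}{2-\kappa}$ the inner maximum is $\frac{(G+\frac{\kappa}{2}r)^2}{2r}$, which is decreasing in $r$ on that range and equals $\kappa G$ at the right endpoint, while for smaller $r$ it is $G - \frac{1-\kappa}{2}r \ge \frac{G}{2-\kappa} \ge \kappa G$. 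So no choice of $s$ closes the gap between $\kappa$ and the strictly larger $\frac{2\kappa}{1+\kappa}$ (strict whenever $\kappa < 1$, which is the generic case since $\kappa \le Q_{ii} = 1$). The sharper factor in Nesterov's theorem requires structure beyond the comparison-step argument you sketch; you flag this obstacle yourself but leave it unresolved, so as written your argument proves the theorem only with $\kappa$ in place of $\frac{2\kappa}{1+\kappa}$ --- or you should do what the paper does and cite the result.

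On the coupling through $E$, your worry is well founded, and there you are actually more careful than the paper: as defined, $E$ takes a \emph{maximum} over coordinates of the gain ratio, so $1-E$ upper-bounds the ratio at every coordinate and equals it only at the best one, whereas the uniform coordinatewise \emph{lower} bound that both you and the paper need would require a minimum in the definition. The paper's one-line justification glosses over this, so it is not a defect of your proposal relative to the paper's own proof. The remaining pieces of your plan (tower property, $D(\alpha^{(0)})=0$, and $\|\alpha^* - \alpha^{(0)}\|^2 \le nC^2$ explaining the additive constant) are consistent with how those terms arise in the cited theorem.
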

\begin{proof}
Theorem~5 by \cite{nesterov2012efficiency} applied to our setting
ensures linear convergence
$$ \Expectation[D(\alpha^*) - D(\alpha^{(t)})] \leq \left( D(\alpha^*) + \frac{n C^2}{2} \right) \cdot \left(1-\frac{2\kappa}{(1+\kappa) n}\right)^{t}, $$
and in fact the proof establishes a linear decay of the expected
suboptimality by the factor $1 - \frac{2\kappa}{(1+\kappa) n}$ in each
single iteration. The improvement is reduced by a factor of at most
$1 - E(w, \tilde w)$, by lemma~\ref{lemma:progress} and the definition
of the relative approximation error.
\end{proof}

We conclude from Theorem~\ref{theorem:progress} that the behavior of
BSCA can be divided into and early and a late phase. For fixed weight
degradation, the relative approximation error is small as long as the
progress is sufficiently large, which is the case in early iterations.
Then the algorithm is nearly unaffected by the budget constraint, and
multiplicative progress at a fixed rate is achieved. Progress gradually
decays when approaching the optimum, which increases the relative
approximation error, until BSCA stalls. In fact, the theorem does not
witness further progress for $E(w, \tilde w) \geq 1$. Due to
$w^* \not\in W_B$, the KKT violations do not decay to zero, and the
algorithm approaches a limit distribution.%
\footnote{BSCA does not converge to a unique point. It does not become
  clear from the analysis provided by \cite{Wang-et-al:bck} whether
  this is also the case for BSGD, or whether the decaying learning rate
  allows BSGD to converge to a local minimum.}
The precision to which the optimal SVM solution can be approximated is
hence limited by the relative approximation error, or indirectly, by the
weight degradation.

\paragraph{Budget Maintenance Rate}

The rate at which budget maintenance is triggered can play a role, in
particular if the procedure consumes a considerable share of the overall
runtime. In the following we highlight a difference between BSGD and
BSCA. For an algorithm $A$ let
\begin{align*}
	p_A = \lim_{T \to \infty} \Expectation\left[ \frac{1}{T} \cdot \Big|\Big\{ t \in \{1, \dots, T\} \,\Big|\, y_{i^{(t)}} \langle w^{(t-1)}, \phi(x_{i^{(t)}}) \rangle < 1 \Big\}\Big| \right]
\end{align*}
denote the expected fraction of optimization steps in which the target
margin is violated, in the limit $t \to \infty$ (if the limit exists).
The following lemma establishes the fraction for primal SGD
(eq.~\eqref{eq:update-primal}) and dual SCA (eq.~\eqref{eq:update-dual}),
both without budget.

\begin{lemma} \label{lemma:rate}
Under the conditions
(i) $\alpha^*_i \in \{0, C\} \Rightarrow \fp{D(\alpha^*)}{\alpha_i} \not= 0$
and
(ii) $\fp{D(\alpha^{(t)})}{\alpha_{i_t}} \not= 0$
(excluding only a zero-set of cases) it holds
$p_\text{SGD} = \frac{1}{n}\sum_{i=1}^n \frac{\alpha^*_i}{C}$ and
$p_\text{SCA} = \frac{1}{n} |\{i \,|\, 0 < \alpha^*_i < C\}|$.
\end{lemma}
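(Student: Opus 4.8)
The unifying observation is that the dual gradient satisfies $\fp{D(\alpha)}{\alpha_i} = 1 - (Q\alpha)_i = 1 - y_i f(x_i)$, so a margin violation $y_i f(x_i) < 1$ at the selected coordinate is exactly the event $\fp{D(\alpha)}{\alpha_i} > 0$. Both solvers operate without a budget and hence converge to the unique optimum $\alpha^*$ (uniqueness from $Q \succ 0$), so the plan is to translate the asymptotic frequency of this sign event into a statement about $\alpha^*$ via the Karush--Kuhn--Tucker (KKT) conditions, treating the two algorithms by different mechanisms.

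For SGD I would exploit the averaging structure of the Pegasos recurrence. Writing the update~\eqref{eq:update-primal} coordinate-wise, every entry is multiplied by $(1 - 1/t)$ and the selected coordinate receives an additive $\frac{nC}{t}$ exactly on a margin violation. With $\alpha^{(0)} = 0$ this recurrence has the closed form $\alpha_i^{(t)} = \frac{nC}{t}\,\big|\{s \le t : i^{(s)} = i \text{ and } y_i f^{(s-1)}(x_i) < 1\}\big|$, i.e.\ each coefficient is precisely the running frequency with which coordinate $i$ was selected on a violating step, rescaled by $nC$. Summing over $i$ collapses the per-coordinate counts into the total violation count, giving the exact identity $\frac1t\big|\{s \le t : \text{violation}\}\big| = \frac{1}{nC}\sum_i \alpha_i^{(t)}$ between random variables. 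Taking expectations and letting $t \to \infty$, convergence $\Expectation[\alpha_i^{(t)}] \to \alpha^*_i$, together with the standard boundedness of the Pegasos iterates (which justifies interchanging limit and expectation), yields $p_\text{SGD} = \frac{1}{nC}\sum_i \alpha^*_i = \frac1n \sum_i \frac{\alpha^*_i}{C}$.

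For SCA the mechanism is different, since no averaging identity is available, so I would argue per coordinate through the limiting sign of the gradient. Partition $\{1,\dots,n\}$ into the three KKT regimes of $\alpha^*$. By the strict complementarity assumption~(i), coordinates with $\alpha^*_i = 0$ have $\fp{D(\alpha^*)}{\alpha_i} < 0$ and coordinates with $\alpha^*_i = C$ have $\fp{D(\alpha^*)}{\alpha_i} > 0$, so by continuity of the gradient along the convergent sequence these signs are eventually locked in; in both cases the truncated step~\eqref{eq:update-dual} produces $\delta = 0$, and once the coordinate has settled against its box constraint it contributes no model change. For free coordinates $0 < \alpha^*_i < C$ one has $\fp{D(\alpha^*)}{\alpha_i} = 0$, yet assumption~(ii) guarantees that at the moment of selection the gradient is generically nonzero (excluding a zero-set), so the step genuinely moves $\alpha_i$ in the interior. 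Counting the selections that produce an effective update then gives the frequency $\frac1n\,\big|\{i : 0 < \alpha^*_i < C\}\big|$ of free support vectors.

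The main obstacle is the free-coordinate analysis for SCA. The convergent sequence $\alpha^{(t)} \to \alpha^*$ leaves the sign of $\fp{D(\alpha^{(t-1)})}{\alpha_i}$ indeterminate for free coordinates, since it oscillates around zero; the literal margin event $y_i f^{(t-1)}(x_i) < 1$ therefore does not stabilize there, while the bounded-at-$C$ coordinates contribute persistent violations. The delicate step is thus to identify $p_\text{SCA}$ with the asymptotic rate of model-changing (nonzero) updates $\delta \ne 0$ rather than with the raw sign of the margin: this is exactly the event isolated by~(i) (no update at settled box constraints) and~(ii) (a genuine update at free coordinates), and it is what makes the budget-maintenance comparison with SGD meaningful. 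Establishing that the empirical fraction of such steps converges to its per-coordinate average---rather than merely that each coordinate's limiting behavior is as described---is the technical crux, and I would handle it by a bounded-convergence argument analogous to the SGD case.
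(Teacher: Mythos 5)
Your proposal is correct, and at its core it follows the same two-pronged strategy as the paper's proof: for SGD, extract the violation frequency from the update recurrence~\eqref{eq:update-primal} using convergence $\alpha^{(t)} \to \alpha^*$; for SCA, use a finite-identification argument (the paper cites \cite{lin2001convergence} for this) under which the bound variables settle exactly at $\{0, C\}$ thanks to strict complementarity~(i) while free variables keep making nonzero steps thanks to~(ii). Two differences in execution are worth recording. First, for SGD the paper argues asymptotically---the additions $\indicator{\cdot}\, nC$ and the subtractions $\alpha_i^{(t-1)}$ must cancel in Ces\`aro mean because $\sum_t 1/t$ diverges---whereas you solve the recurrence in closed form, $\alpha_i^{(t)} = \frac{nC}{t}\,\big|\{s \le t : i^{(s)} = i \text{ and violation at } s\}\big|$, so that the frequency identity holds exactly at every finite $t$ and the limit argument reduces to $\Expectation[\alpha^{(t)}] \to \alpha^*$ plus boundedness; this is a tighter version of the same idea. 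Second, and more substantively, you correctly flag that the literal definition of $p_A$ (fraction of margin-violating steps) is inconsistent with the claimed value of $p_\text{SCA}$: under~(i), coordinates fixed at $C$ are \emph{persistent} margin violators yet produce $\delta = 0$, while the violation indicator of free coordinates oscillates and need not converge. The paper's own proof silently computes the fraction of nonzero (support-vector-adding) steps rather than literal margin violations---exactly the reinterpretation you propose---and that is indeed the quantity relevant for counting budget-maintenance events, as the surrounding text intends. Your only loose end, establishing that the empirical fraction of nonzero steps converges to $\frac{1}{n}|\{i \,|\, 0 < \alpha^*_i < C\}|$, is handled as you suggest: after the almost-surely finite identification time, the nonzero-step event coincides with selecting a free coordinate, and the law of large numbers with dominated convergence finishes the argument; the paper is no more detailed at this point. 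So your write-up reproduces the paper's argument while making explicit a definitional repair that the paper leaves implicit.
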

\begin{proof}
In the update equation \eqref{eq:update-primal}, due to
$\alpha^{(t)} \to \alpha^*$ and $\sum_{t=1}^\infty \frac{1}{t} = \infty$,
the subtraction of $\alpha^{(t-1)}_i$ and the addition of $nC$ with
learning rate $\frac{1}{t}$ must cancel out in the limit $t \to \infty$,
in expectation. Formally speaking, we obtain
$$ \lim_{T \to \infty} \Expectation\left[ \frac{1}{T} \sum_{t=1}^T \indicator{i^{(t)} = i} \indicator{y_{i^{(t)}} \langle w^{(t-1)}, \phi(x_{i^{(t)}}) \rangle < 1} n C - \alpha^{(t-1)}_i \right] = 0 \quad \forall i \in \{1, \dots, n\}, $$
and hence
$\lim_{T \to \infty} \Expectation\left[ \frac{1}{T} \sum_{t=1}^T \indicator{i^{(t)} = i} \indicator{y_{i^{(t)}} \langle w^{(t-1)}, \phi(x_{i^{(t)}}) \rangle < 1} \right] = \frac{\alpha^*_i}{n C}$.
Summation over $i$ completes the proof of the result for SGD.

In the dual algorithm, with condition (i) and the same argument as in
\cite{lin2001convergence} there exists an iteration $t_0$ so that for
$t > t_0$ all variables fulfilling $\alpha^*_i \in \{0, C\}$ remain
fixed: $\alpha^{(t)}_i = \alpha^{(t_0)}_i$, while all other variables
remain free: $0 < \alpha^{(t)}_i < C$. Assumption (ii) ensures that all
steps on free variables are non-zero and hence contribute $1/n$ to
$p_\text{SCA}$ in expectation, which yields
$p_\text{SCA} = \frac{1}{n} |\{i \,|\, 0 < \alpha^*_i < C\}|$.
\end{proof}

A point $(x_{i^{(t)}}, y_{i^{(t)}})$ that violates the target margin of
one is added as a new support vector in BSGD as well as in BSCA. After
the first $B$ such steps, all further additions trigger budget
maintenance. Hence Lemma~\ref{lemma:rate} gives an asymptotic indication
of the number of budget maintenance events, provided
$\tilde w \approx w$, i.e., if the budget is not too small. The
different rates for primal and dual algorithm underline the quite
different optimization behavior of the two algorithms: while (B)SGD
keeps making non-trivial steps on all training points corresponding to
$\alpha^*_i > 0$ (support vectors w.r.t.\ $w^*$), after a while the dual
algorithm operates only on the free variables $0 < \alpha^*_i < C$.

\section{Experiments}

In this section we compare our dual BSCA algorithm to the primal BSGD
method on the binary classification problems ADULT, COD-RNA, COVERTYPE,
IJCNN, and SUSY, covering a range of different sizes. The
regularization parameter $C = \frac{1}{n \cdot \lambda}$ and the kernel
parameter $\gamma$ were tuned with grid search and cross-validation, see
table~\ref{table:dataSetsProperties}.
Due to space constraints, we present selected, representative results
in the paper. Additional results including runs on a smaller budget are
found in the appendix.
%The supplement also contains the C++ implementation of our algorithm.

\begin{table}[h]
\begin{center}
	\caption{
		\label{table:dataSetsProperties}
		Data sets used in this study, hyperparameter settings, test accuracy and training time of the full SVM model (using LIBSVM).
		On the SUSY problem, we aborted the extremely long LIBSVM run. The corresponding accuracy is the cross-validation result on a subset.
	}
	\begin{tabular}{lrrllrr}
	\hline
	data set & size & features & $C$ & $\gamma$ & accuracy & training time \\
	\hline
	SUSY & 4,500,000 & 18 & $2^5$ & $2^{-7}$ & $79.79\%$ & $>180h$ \\
	COVTYPE & 581,012 & 54 & $2^7$ & $2^{-3}$ & $75.88\%$ & $45395.729s$\\
	COD-RNA & 59,535 & 8 & $2^5$ & $2^{-3}$ & $96.33\%$ & $53.951s$\\
	IJCNN & 49,990 & 22 & $2^5$ & $2^1$ & $98.77\%$ & $46.914s$\\
	ADULT & 32,561 & 123 & $2^5$ & $2^{-7}$ & $84.82\%$ & $97.152s$ \\
	\hline
	\end{tabular}
%	}
\end{center}
\end{table}

\paragraph{Optimization Performance}
BSCA and BSGD are optimization algorithms. Hence it is natural to
compare them in terms of primal and dual objective function, see
equations \eqref{eq:primal} and \eqref{eq:dual}. Since the solvers
optimize different functions, we monitor both. However, we consider the
primal as being of primary interest since its minimization is the goal
of SVM training, by definition. Convergence plots are displayed in
figure~\ref{figure:primaldualperf}. Overall, the dual BSCA solver
clearly outperforms the primal BSGD method across all problems.
While the dual objective function values are found to be smooth and
monotonic in all cases, this is not the case for the primal.

BSCA generally oscillates less and stabilizes faster (with the exception
of the ADULT problem), while BSGD remains somewhat unstable and is hence
at risk of delivering low-quality solutions for a much longer time when
it happens to stop at one of the peaks.

\begin{figure}[h]
\begin{center}

	\includegraphics[width=0.32\columnwidth]{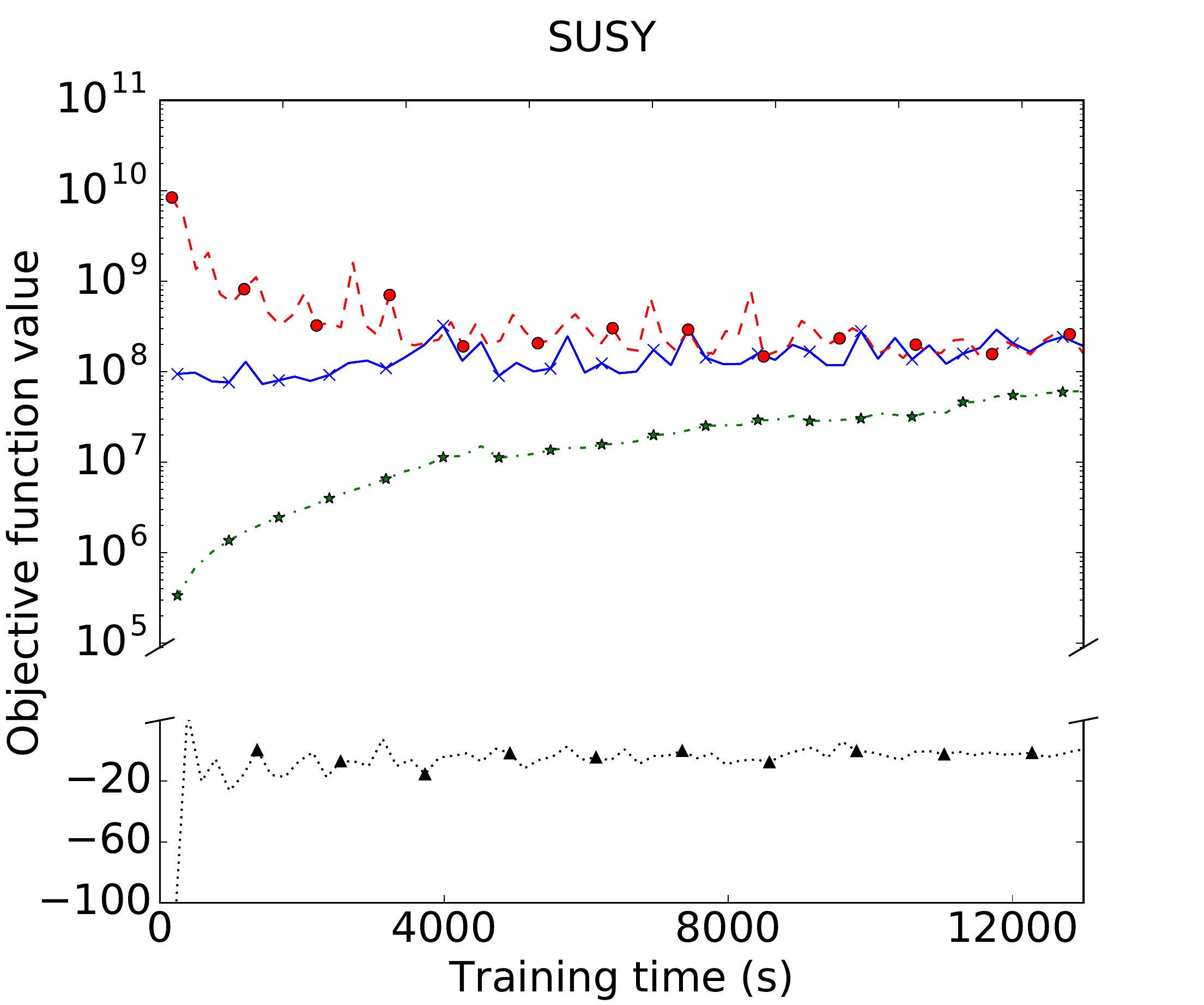}
	\includegraphics[width=0.32\columnwidth]{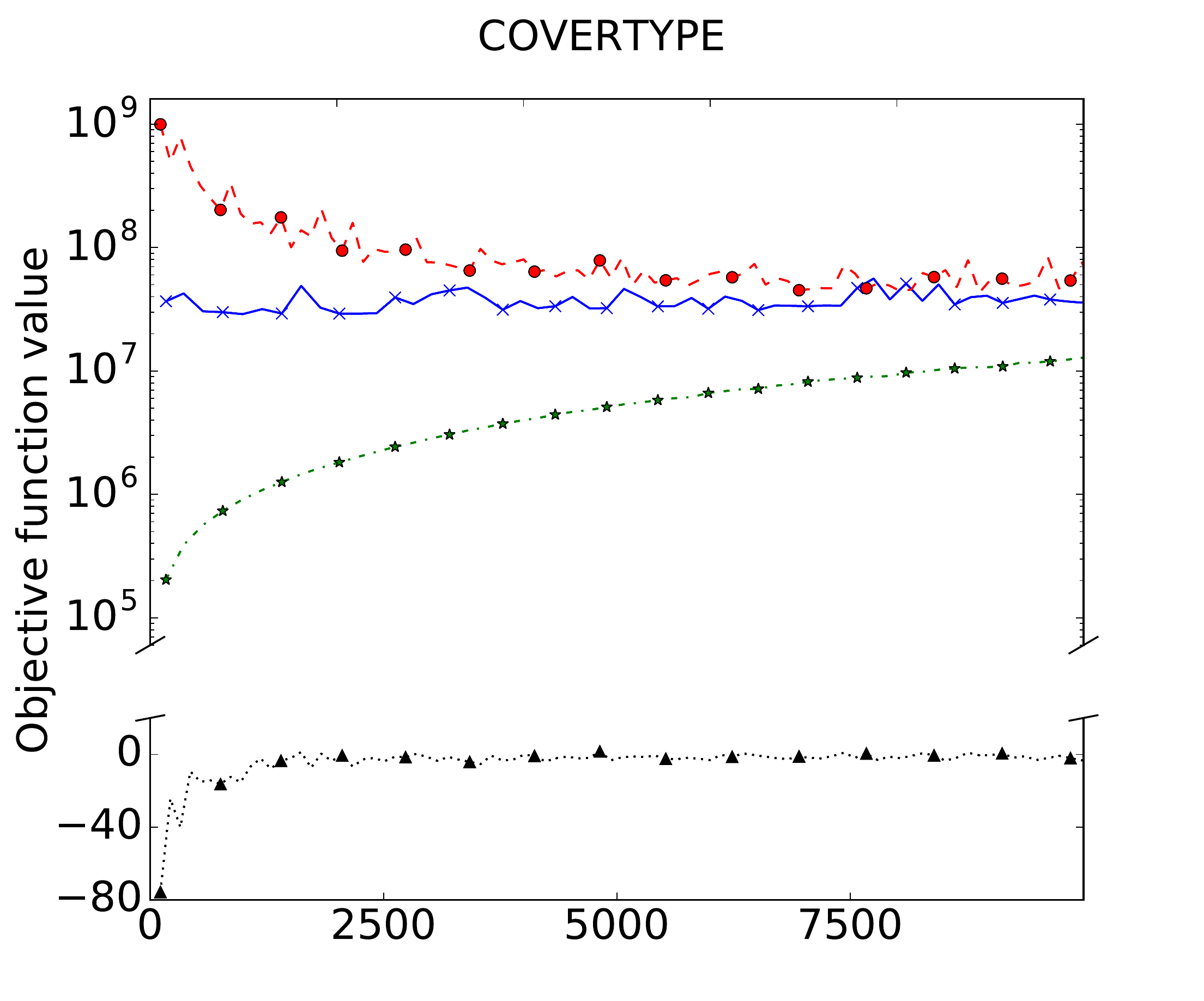}
	\includegraphics[width=0.32\columnwidth]{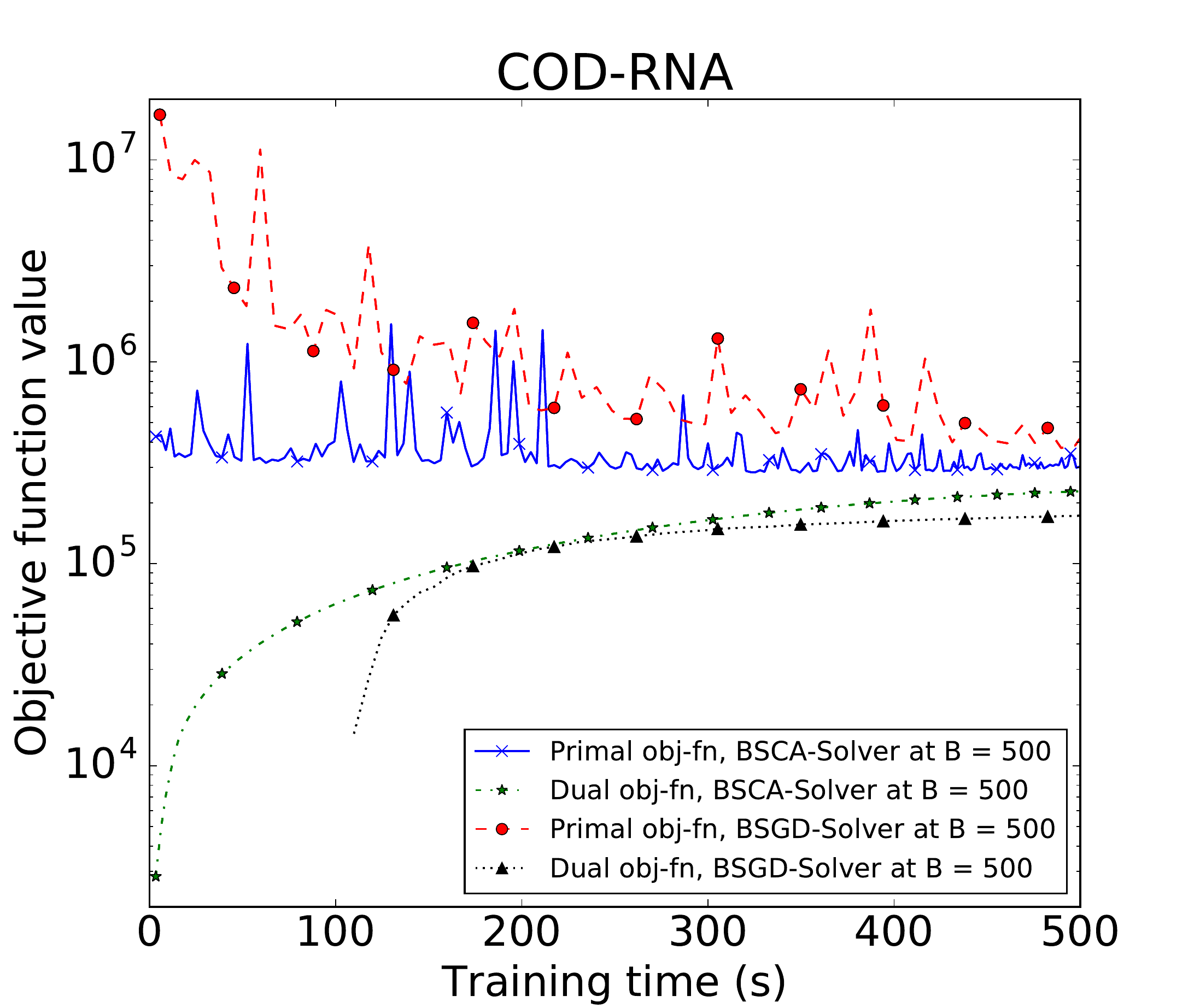}
	\\
	\includegraphics[width=0.32\columnwidth]{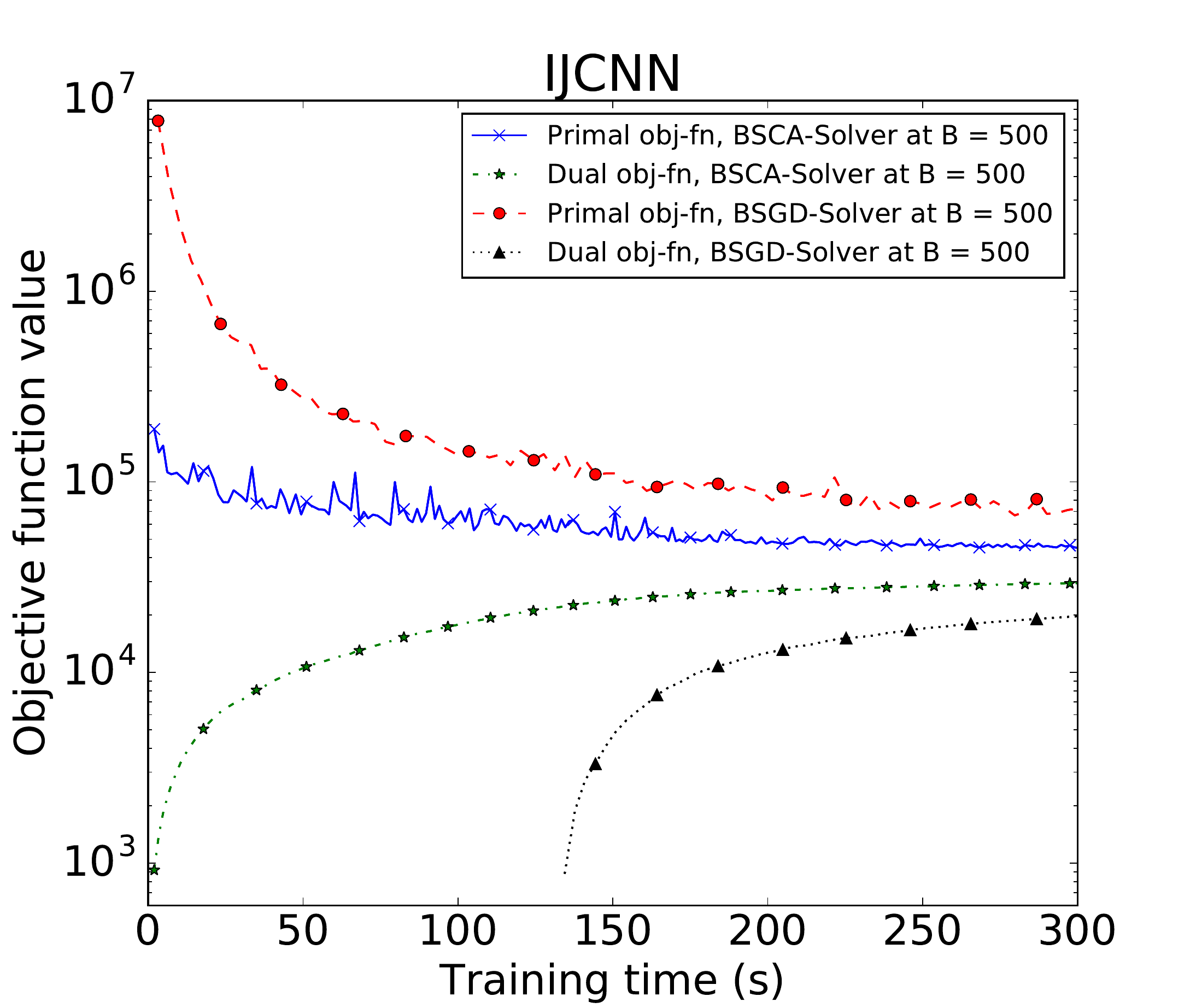}
	\includegraphics[width=0.32\columnwidth]{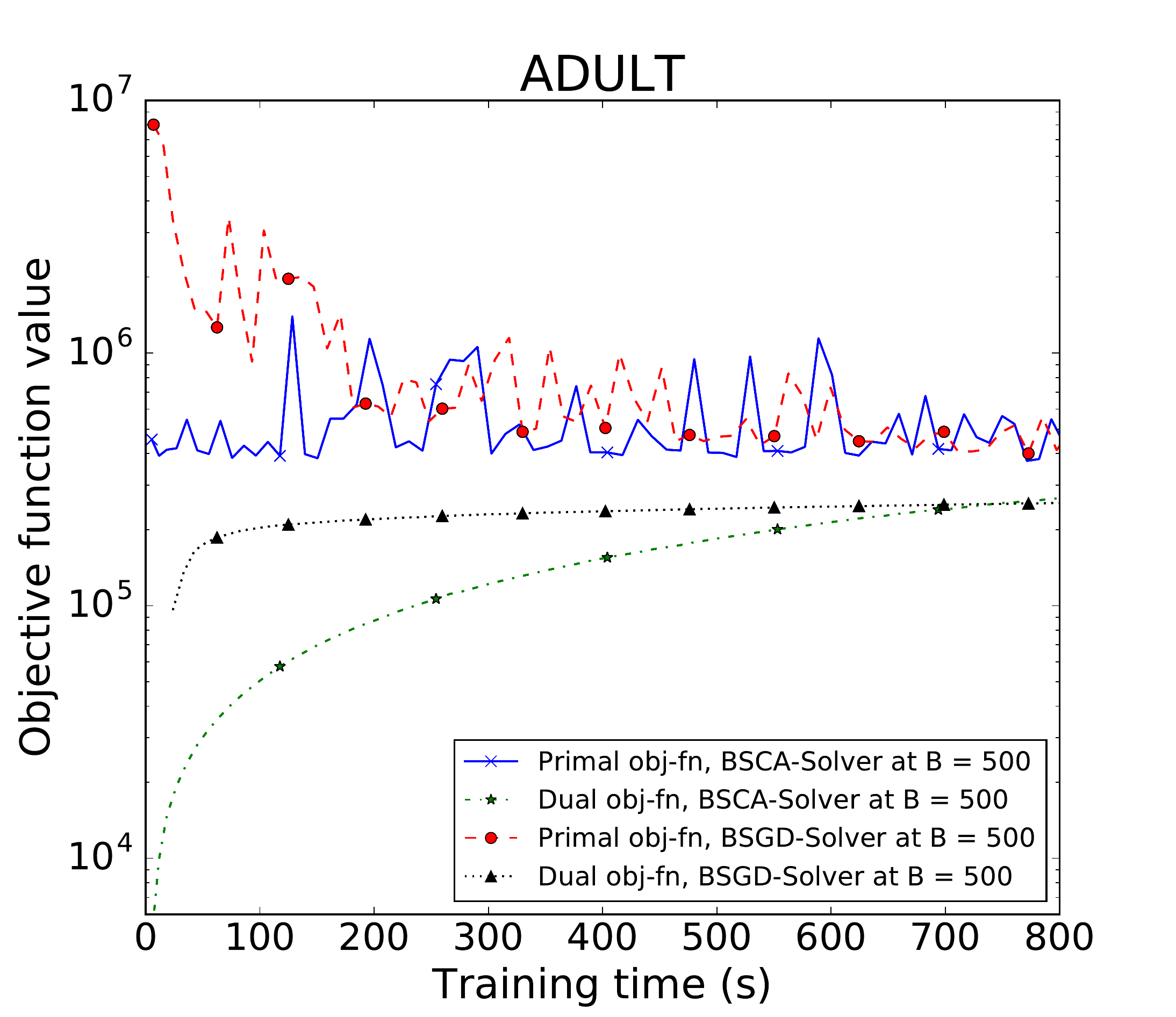}
\end{center}
\caption{
	Primal and dual objective function curves of BSGD and BSCA solvers with a budget of $B=500$.
	We use a mixed linear and logarithmic scale where the dual objective
	stays negative, which happens for BSGD on two problems.
	\label{figure:objfn}
}
\end{figure}

\paragraph{Learning Performance}
\begin{figure}[h]
\begin{center}
	\includegraphics[width=0.32\columnwidth]{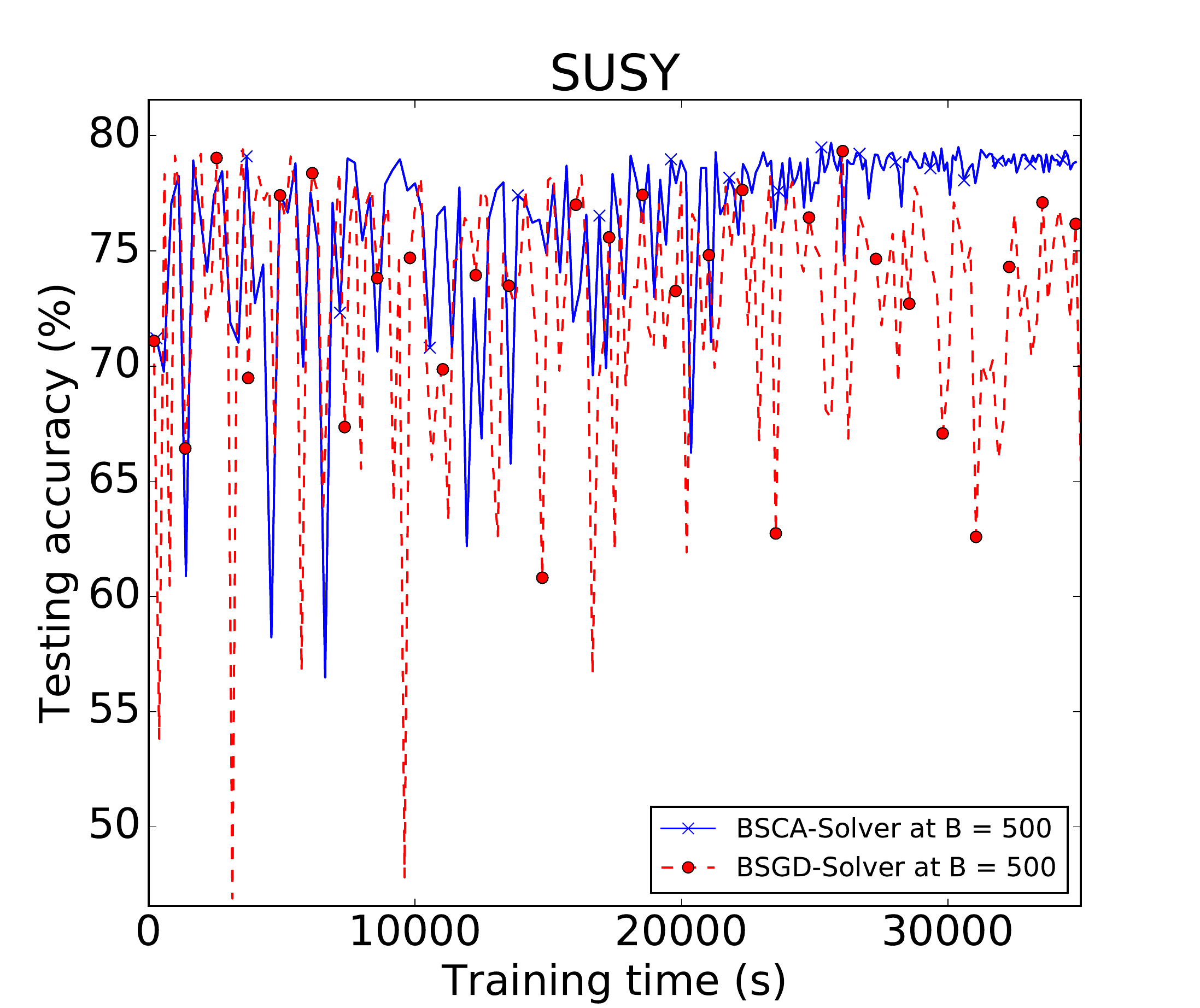} 
	\includegraphics[width=0.32\columnwidth]{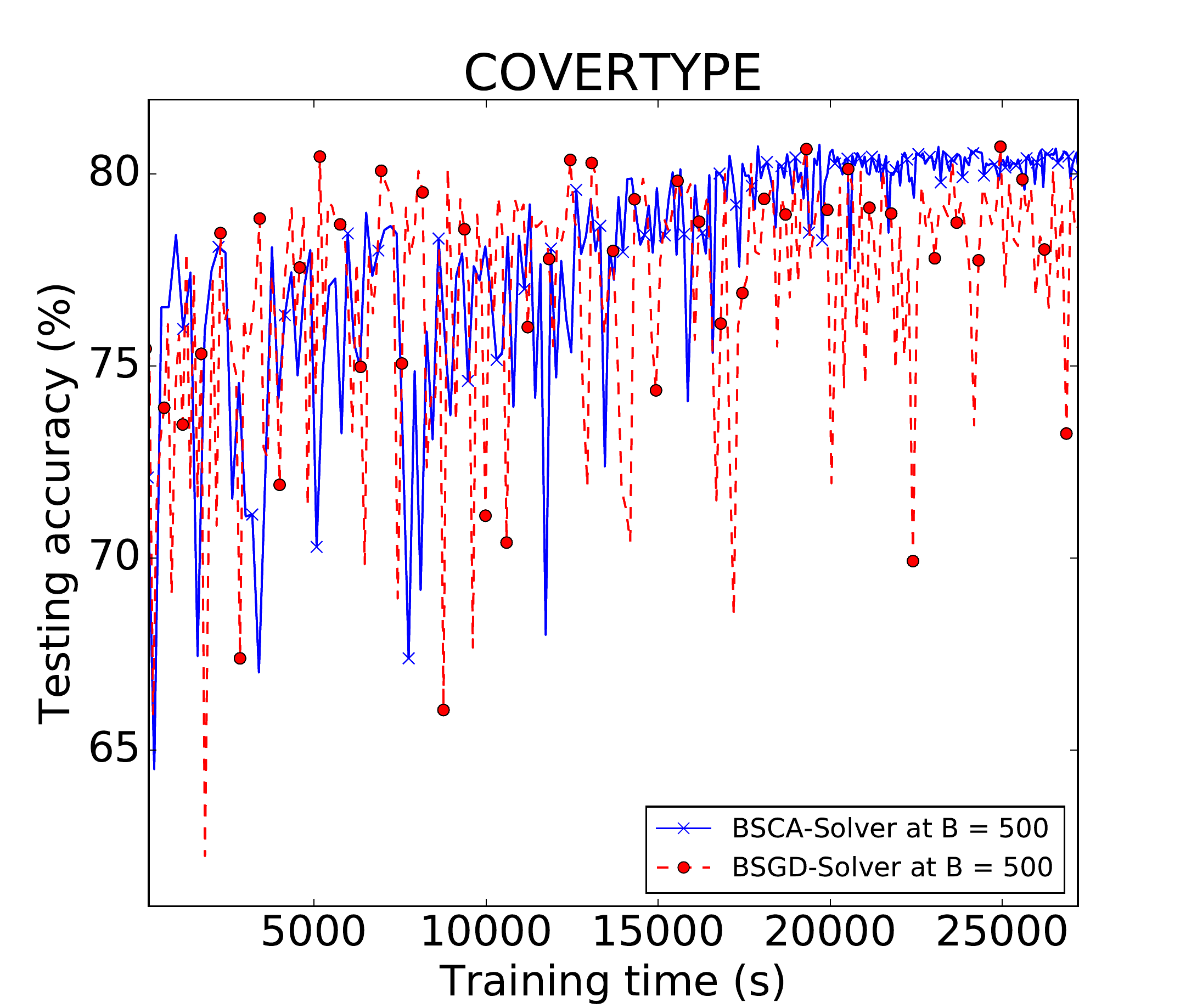}
	\includegraphics[width=0.32\columnwidth]{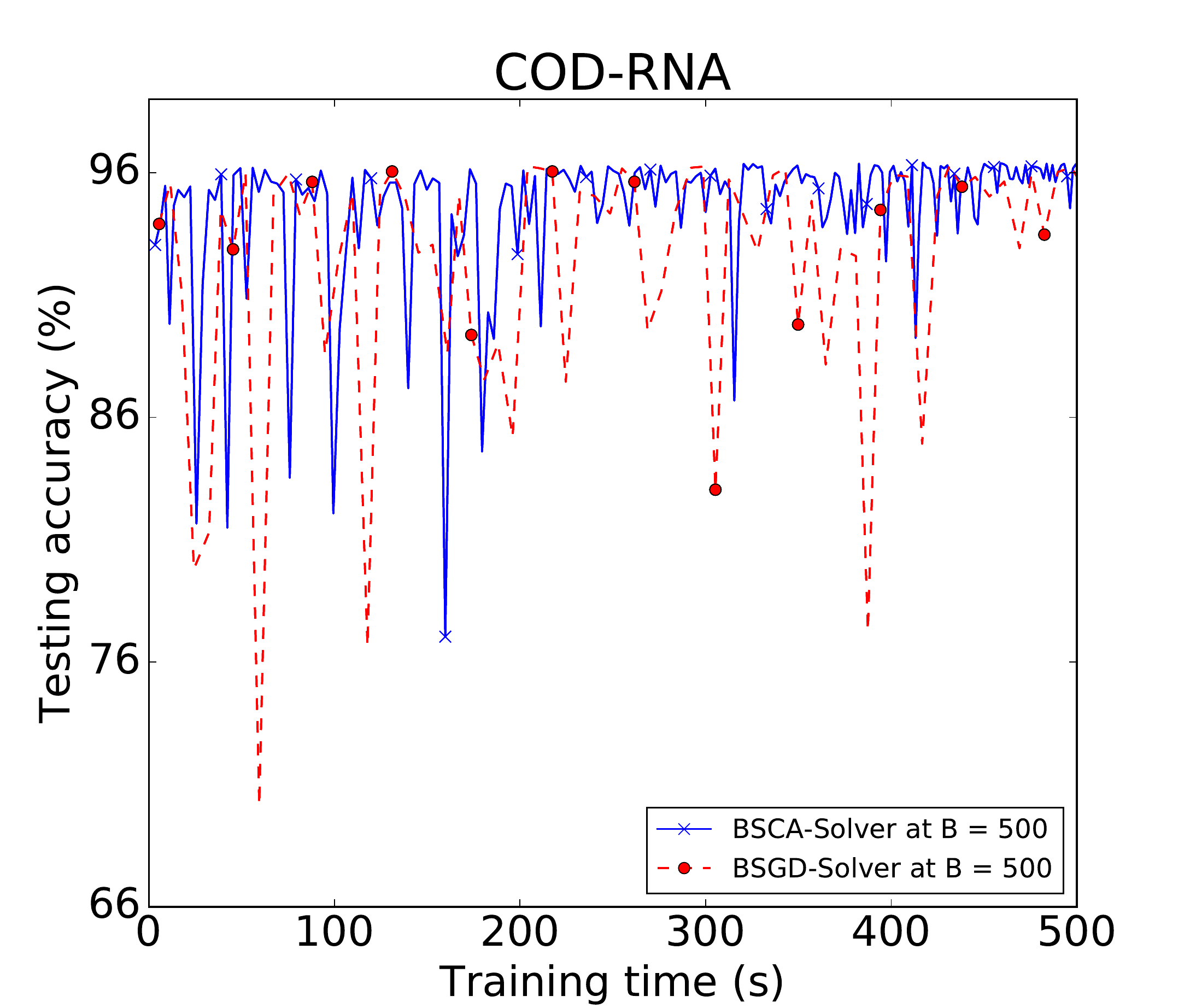}
	\\
	\includegraphics[width=0.32\columnwidth]{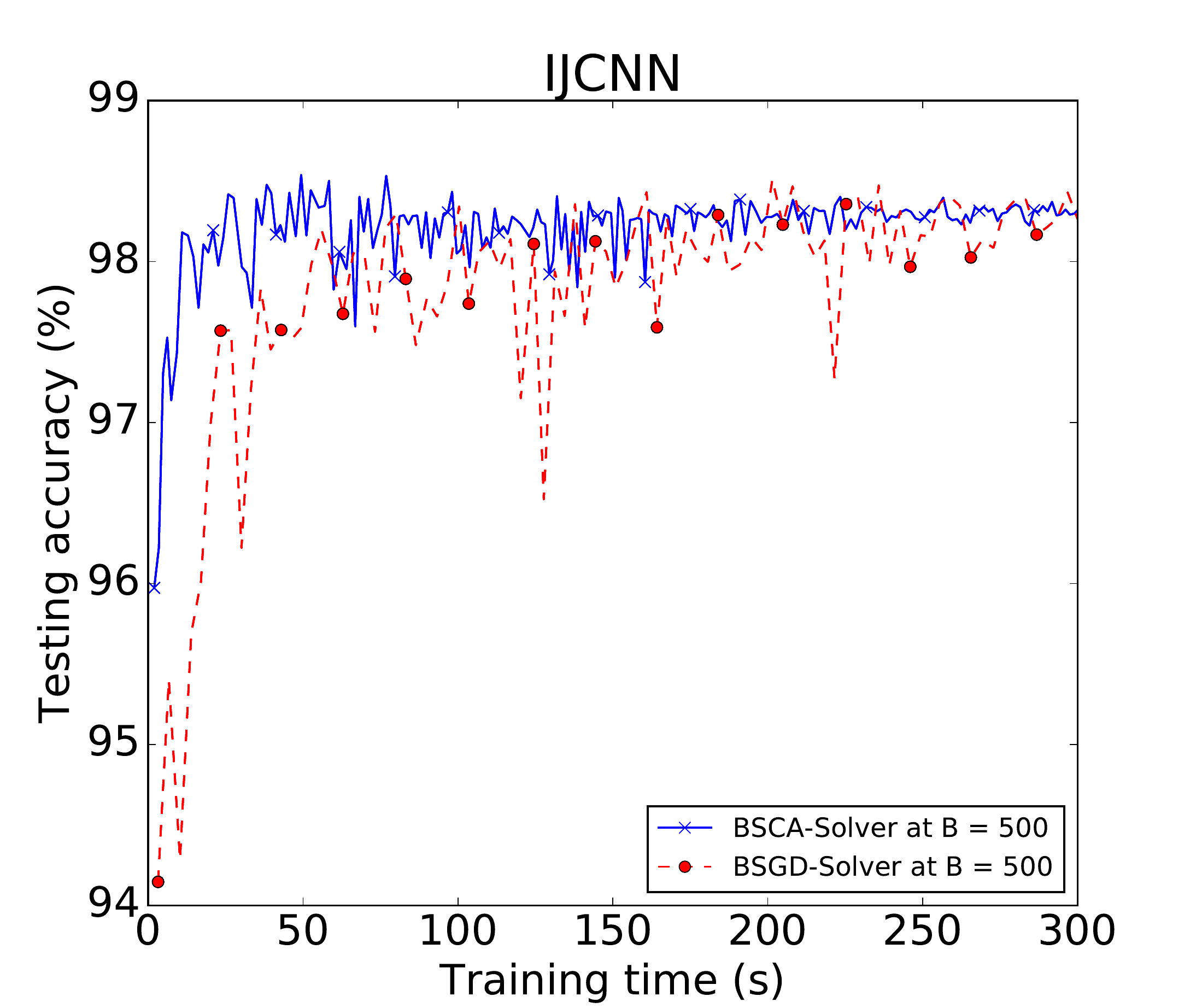}
	\includegraphics[width=0.32\columnwidth]{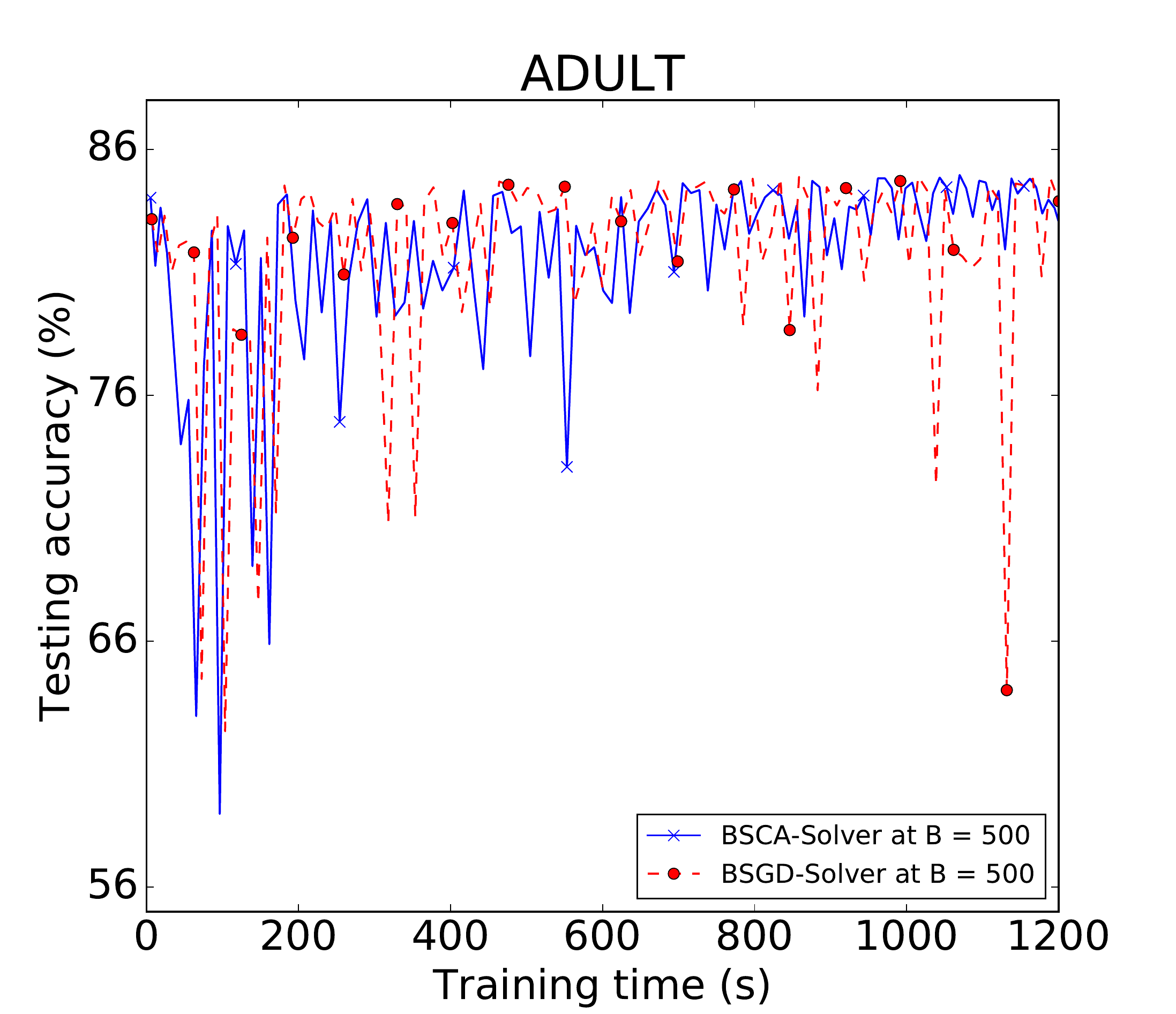}
\end{center}
\caption{
	Test accuracy for the primal and dual solvers at a budget of $B=500$.
	\label{figure:primaldualperf}
}
\end{figure}
Figure~\ref{figure:primaldualperf}
shows the corresponding evolution of the test error. In our experiment,
all budgeted models reach an accuracy that is nearly indistinguishable
from the exact SVM solution. The accuracy converges significantly faster
for the dual solver. For the primal solver we observe a long series of
peaks corresponding to significantly reduced performance. This
observation is in line with the observed optimization behavior. The
effect is particularly pronounced for the largest data sets SUSY and
COVERTYPE. More experimental data is found in the appendix,
including an investigation of the effect of the budget size.

\paragraph{Convergence Behavior}
The next experiment validates the predictions of Lemma~\ref{lemma:rate}
when using a budget. Figure~\ref{figure:mergingFraction}
displays the fraction of merging steps for different budget sizes
applied to the dual and primal solvers. We find the predictions of the
lemma being approximately valid also when using a budget. The figure
highlights an interesting difference in the optimization behavior
between BSGD and BSCA: while the former makes non-zero steps on all
support vectors (training points with a margin of at most one), the
latter manages to fix the dual variables of margin violators (training
points with a margin strictly less than one) at the upper bound~$C$.
\begin{figure}
	\begin{center}
		\includegraphics[width=0.39\textwidth]{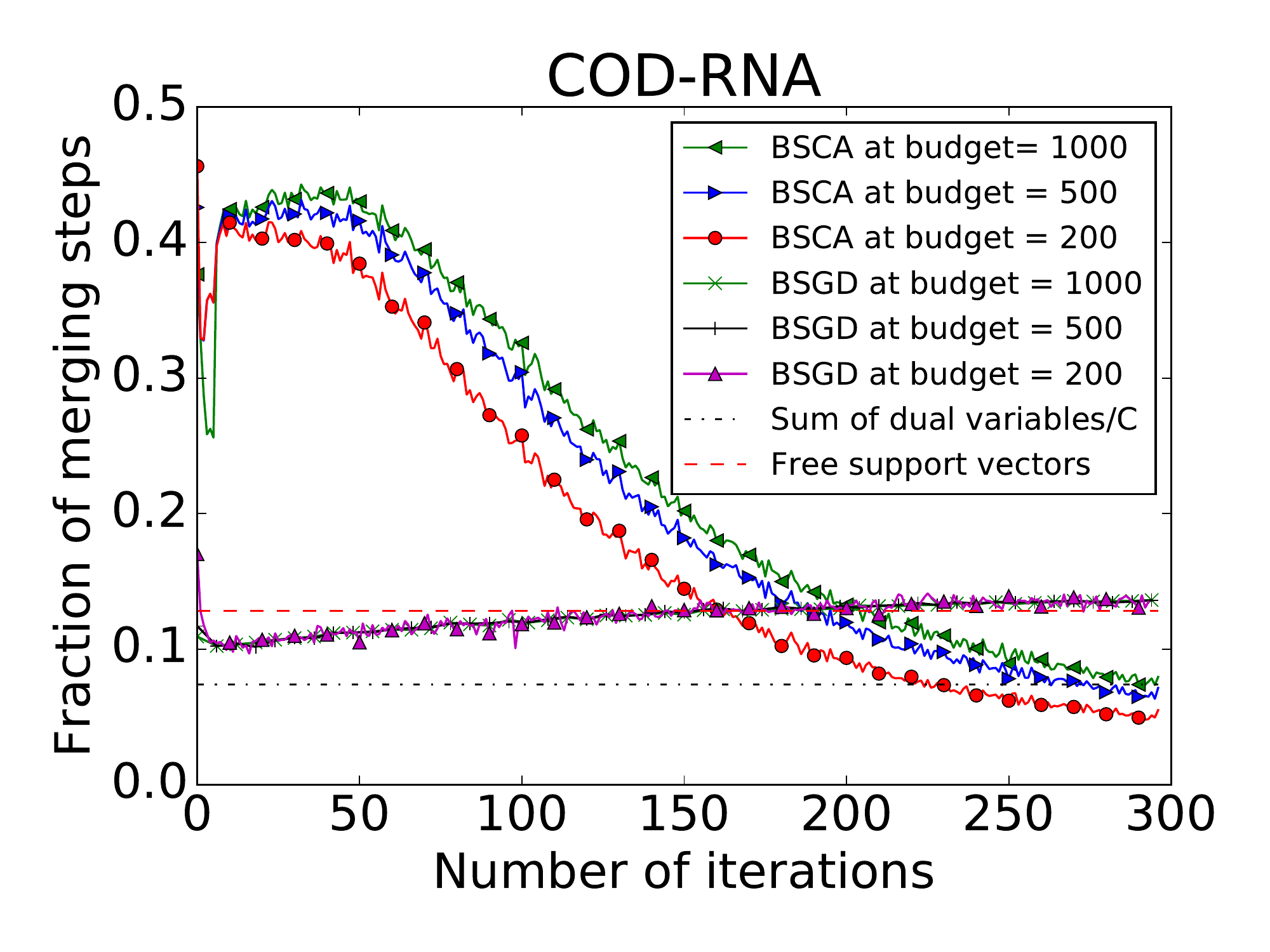}
	\end{center}
	\caption{Fraction of merging steps over a large number of epochs at
		budgets $B \in \{200, 500, 1000\}$. Corresponding curves for the
		other data sets are provided in the appendix.
		\label{figure:mergingFraction}
	}
\end{figure}

\paragraph{Discussion}

Our results do not only indicate that the optimization behavior of BSGD
and BSCA is significantly different, they also demonstrate the
superiority of the dual approach for SVM training, in terms of
optimization behavior as well as in terms of test accuracy. We attribute
its success to the good fit of coordinate descent to the box-constrained
dual problem, while the primal solver effectively ignores the upper
bound (which is not represented explicitly in the primal problem),
resulting in large oscillations. Importantly, the improved optimization
behavior translates directly into better learning behavior. The
introduction of budget maintenance techniques into the dual solver does
not change this overall picture, and hence yields a viable route for
fast training of kernel machines.

\section{Concluding Remarks}

We have presented the first dual decomposition algorithm for support
vector machine training honoring a budget, i.e., an upper bound on the
number of support vectors. This approximate SVM training algorithm
combines fast iterations enabled by the budget approach with the fast
convergence of a dual decomposition algorithm. Like its primal cousin,
it is fundamentally limited only by the approximation error induced by
the budget. We demonstrate significant speed-ups over primal budgeted
training, as well as increased stability. Overall, for training SVMs
on a budget, we can clearly recommend our method as a plug-in
replacement for primal methods. It is rather straightforward to extend
our algorithm to other kernel machines with box-constrained dual
problems.

\bibliographystyle{plain}

\section*{Appendix A: Data Sets and Hyperparameters}

The test problems were selected according to the following criteria,
which taken together imply that applying the budget method is a
reasonable choice:
\begin{itemize}
\item
	The feature dimension is not too large. Therefore a linear SVM
	performs rather poorly compared to a kernel machine.
\item
	The problem size is not too small. The range of sizes spans more
	than two orders of magnitude.
\end{itemize}
The hyperparameters $C$ and $\gamma$ were $\log_2$ encoded and tuned on
an integer grid with cross-validation using LIBSVM, i.e., aiming for the
best possible performance of the non-budgeted machine. The budget was
set to $B=500$ in all experiments, unless stated otherwise. This value
turns out to offer a reasonable compromise between speed and accuracy on
all problems under study.

\section*{Appendix B: Impact of the Budget}

In this section we investigate the impact of the budget and its size on
optimization and learning behavior. We start with an experiment
comparing primal and dual solver without budget. The results are
presented in figure~\ref{figure:bsca_bsgd_0}. It is apparent that the
principal differences between BSGD and BSCA remain the same when run
without budget constraint, i.e., the most significant differences stem
from the quite different optimization behavior of stochastic gradient
descent and stochastic coordinate ascent. The SGD learning curves are
quite noisy with many downwards peaks. The results are in line with
experiments on linear SVMs by \cite{Fan:2008,Hsieh:2008}.

To investigate the effect of the budget size,
Figure~\ref{figure:bsca_bsgd_200} provides test accuracy curves for a
reduced budget size of $B=200$. For some of the test problems this
budget it already rather small, resulting in sub-optimal learning
behavior. Generally speaking, BSCA clearly outperforms BSGD. However,
BSCA fails on the IJCNN data set, while BSGD fails to deliver high
quality solutions on SUSY and COVERTYPE.

Figure \ref{figure:bsca_bsgd_te_200_500} aggregates the data in a
different way, comparing the test accuracy achieved with different
budgets on a common time axis. In this presentation it is easy to read
off the speed-up achievable with a smaller budget.
Unsurprisingly, BSCA with budget $B=200$ is much faster than the same
algorithm with budget $B=500$ when run for the same number of epochs.
However, when it comes to achieving a good test error quickly, the
results are mixed. While the small budget apparently suffices on
COVERTYPE and SUSY, the provided number of epochs does not
suffice to reach good results on IJCNN, where the solver with $B=500$ is
significantly faster. Figure~\ref{figure:bsca_bsgd_objfn_200_500}
presents a similar analysis, but with primal and dual objective function.
Overall it underpins the learning (test accuracy) results, however, it
also reveals a drift effect of the dual solver in particular for the
smaller budget $B=200$, with both objectives rising. This can happen if
the weight degradation becomes large and the gradient computed based on
the budgeted representation does not properly reflect the dual gradient
any more.

\begin{figure}[h]
\begin{center}
	\includegraphics[width=0.32\columnwidth]{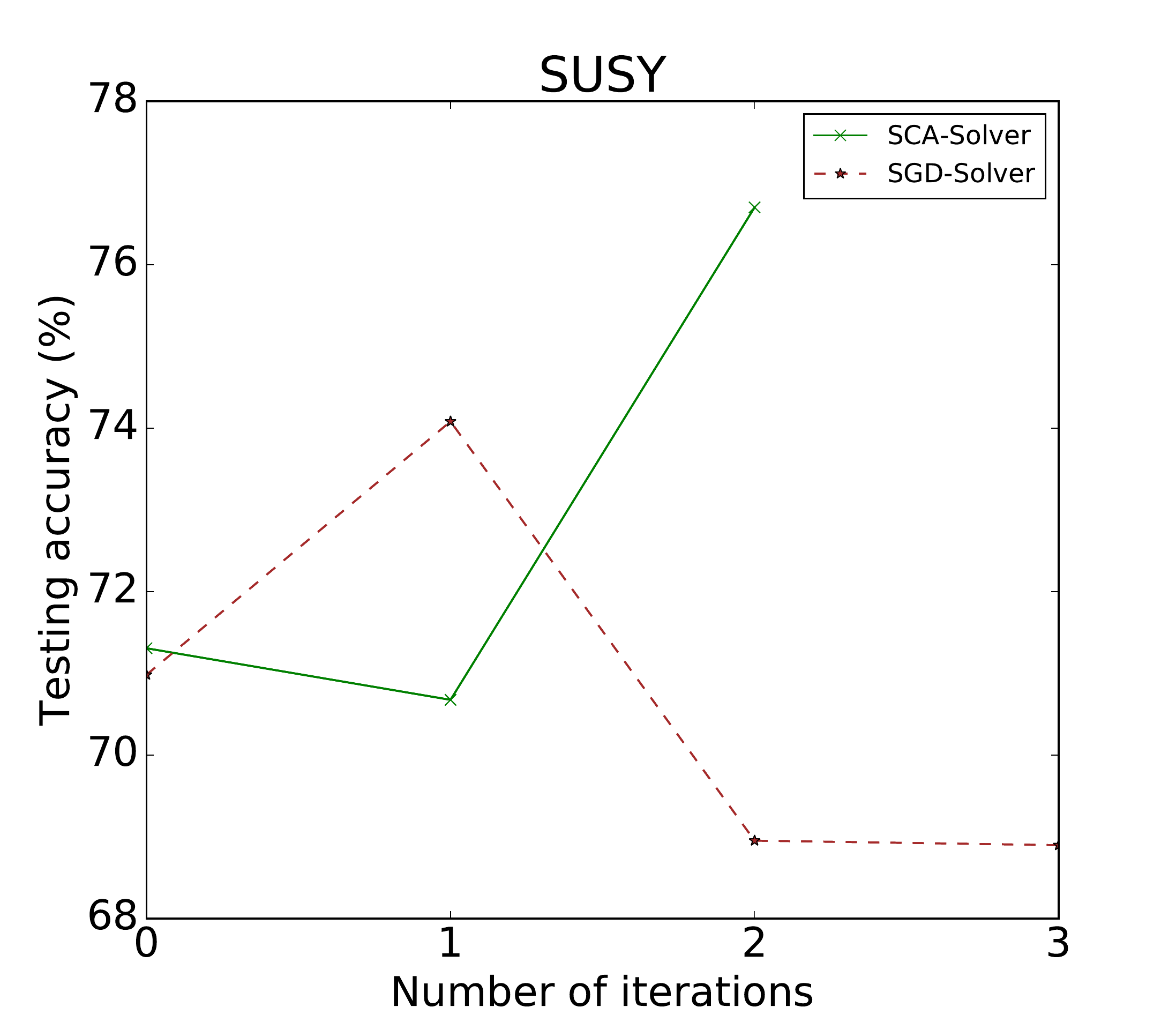}
	\includegraphics[width=0.32\columnwidth]{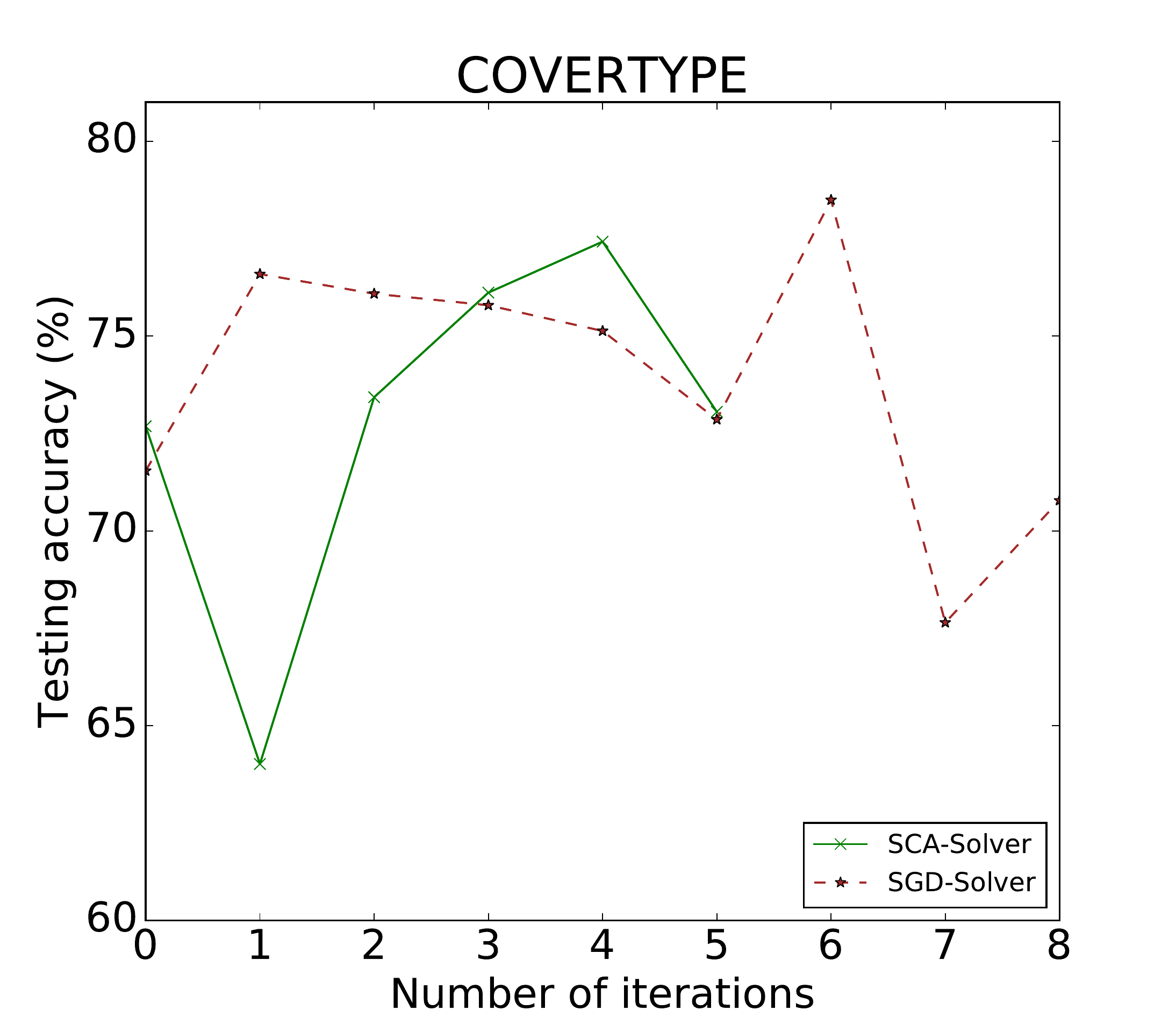}
	\includegraphics[width=0.32\columnwidth]{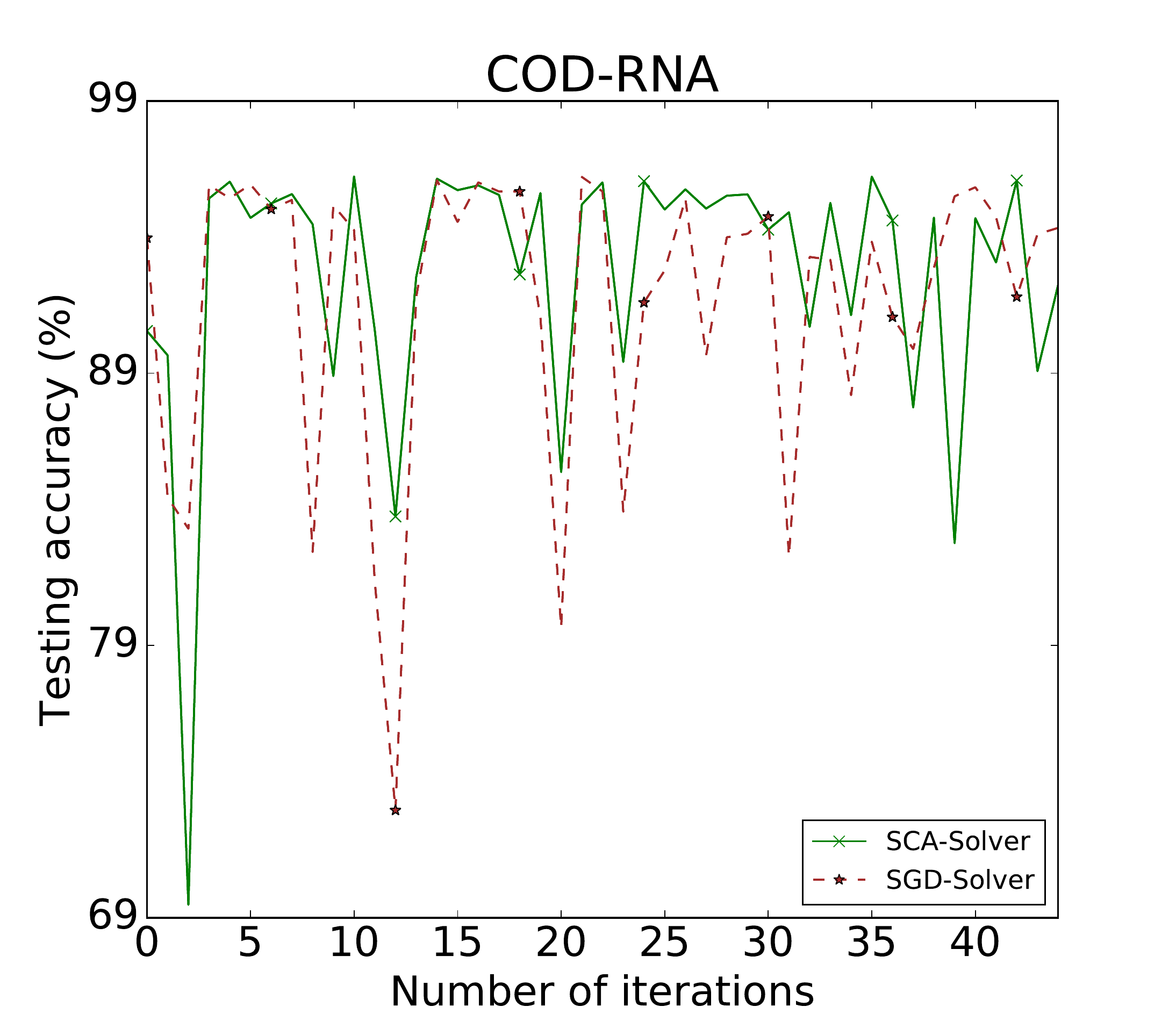}
	\includegraphics[width=0.32\columnwidth]{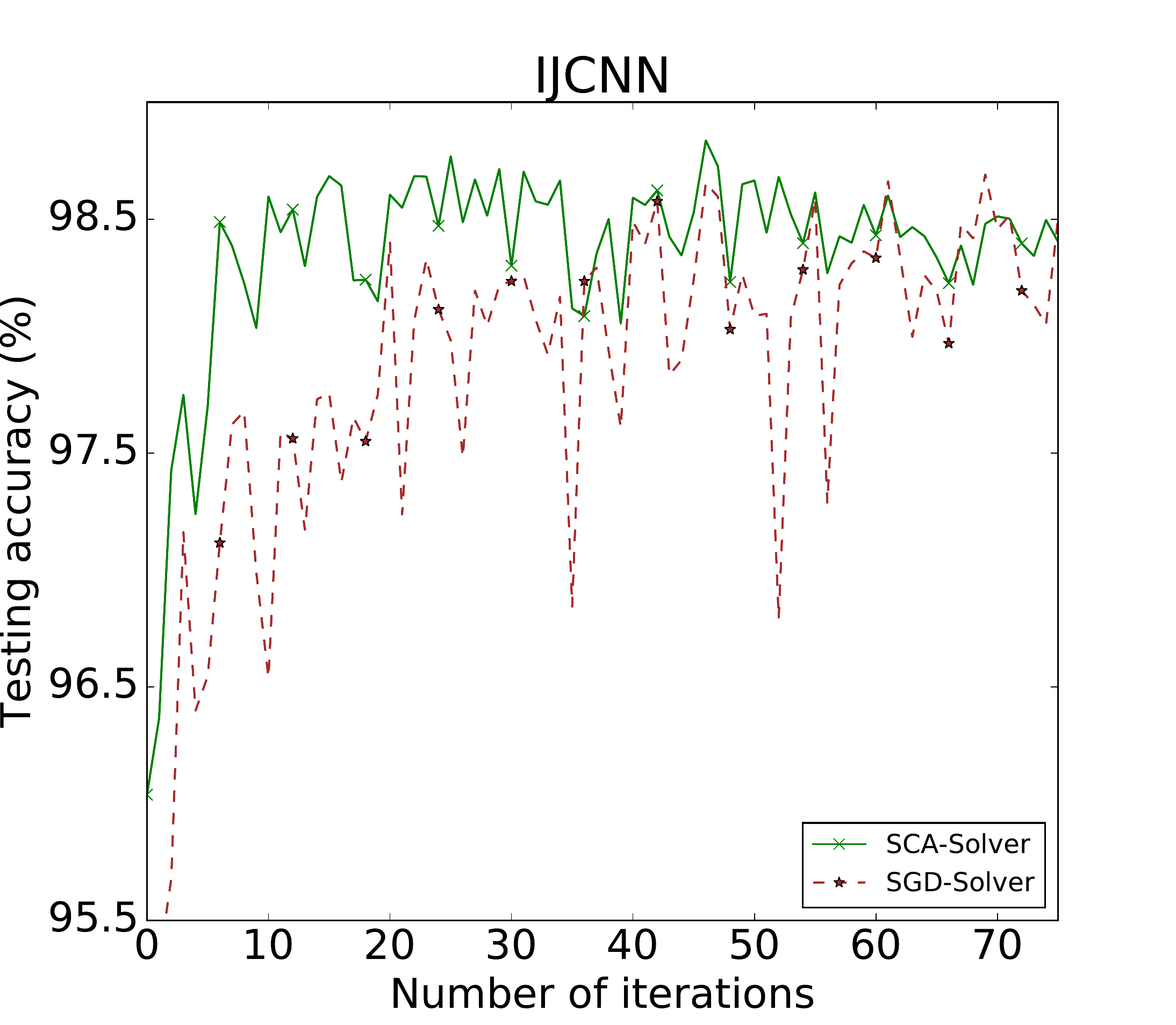}
	\includegraphics[width=0.32\columnwidth]{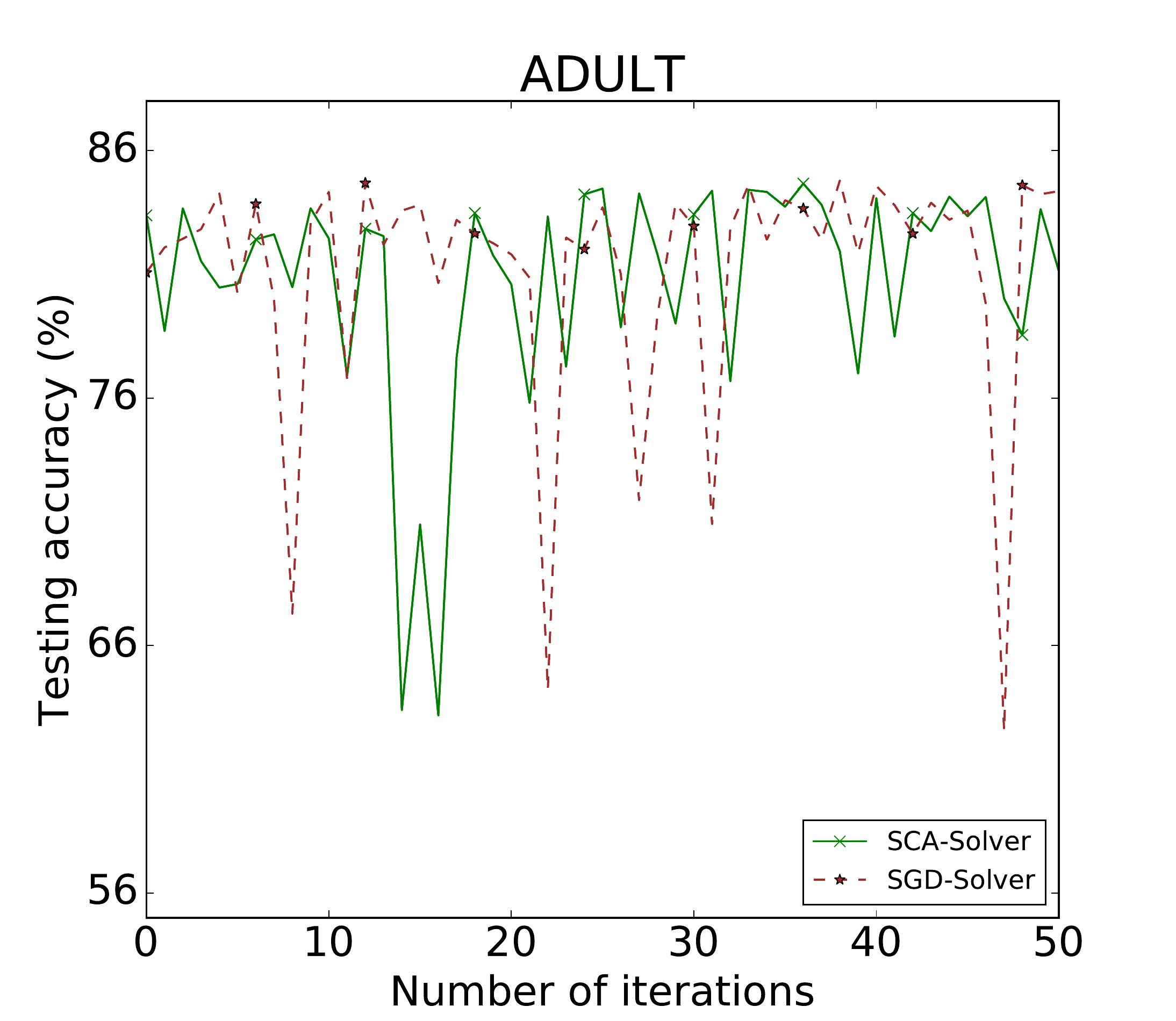}
\end{center}
\caption{
	Test accuracy results of solvers based directly on SCA and SGD, without budget.
	The results are monitored every $300,000$ iterations for SUSY and COVERTYPE,
	and every $10,000$ iterations for all other data sets.
	\label{figure:bsca_bsgd_0}
}
\end{figure}

\begin{figure}[h]
\begin{center}
	\includegraphics[width=0.32\columnwidth]{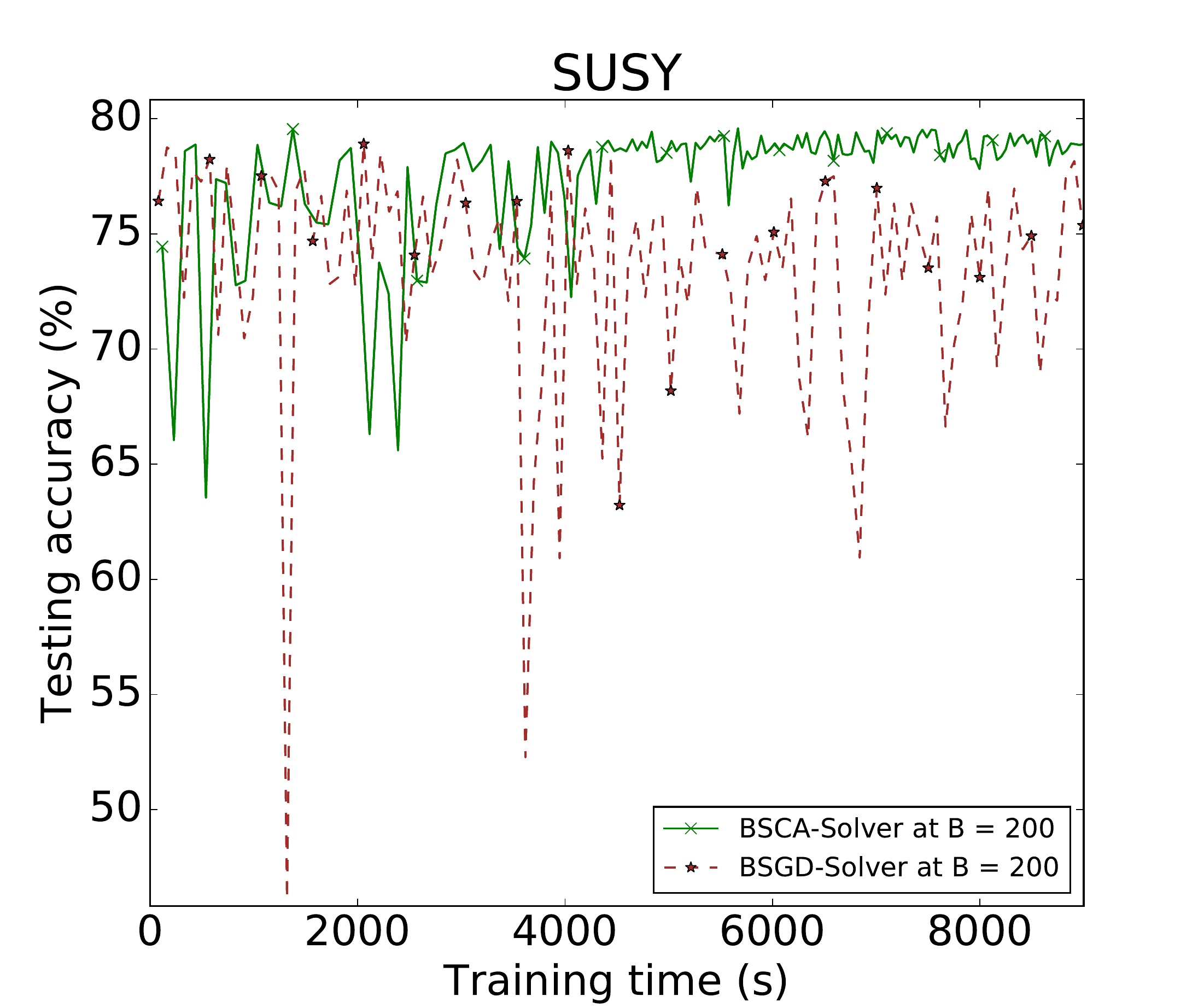}
	\includegraphics[width=0.32\columnwidth]{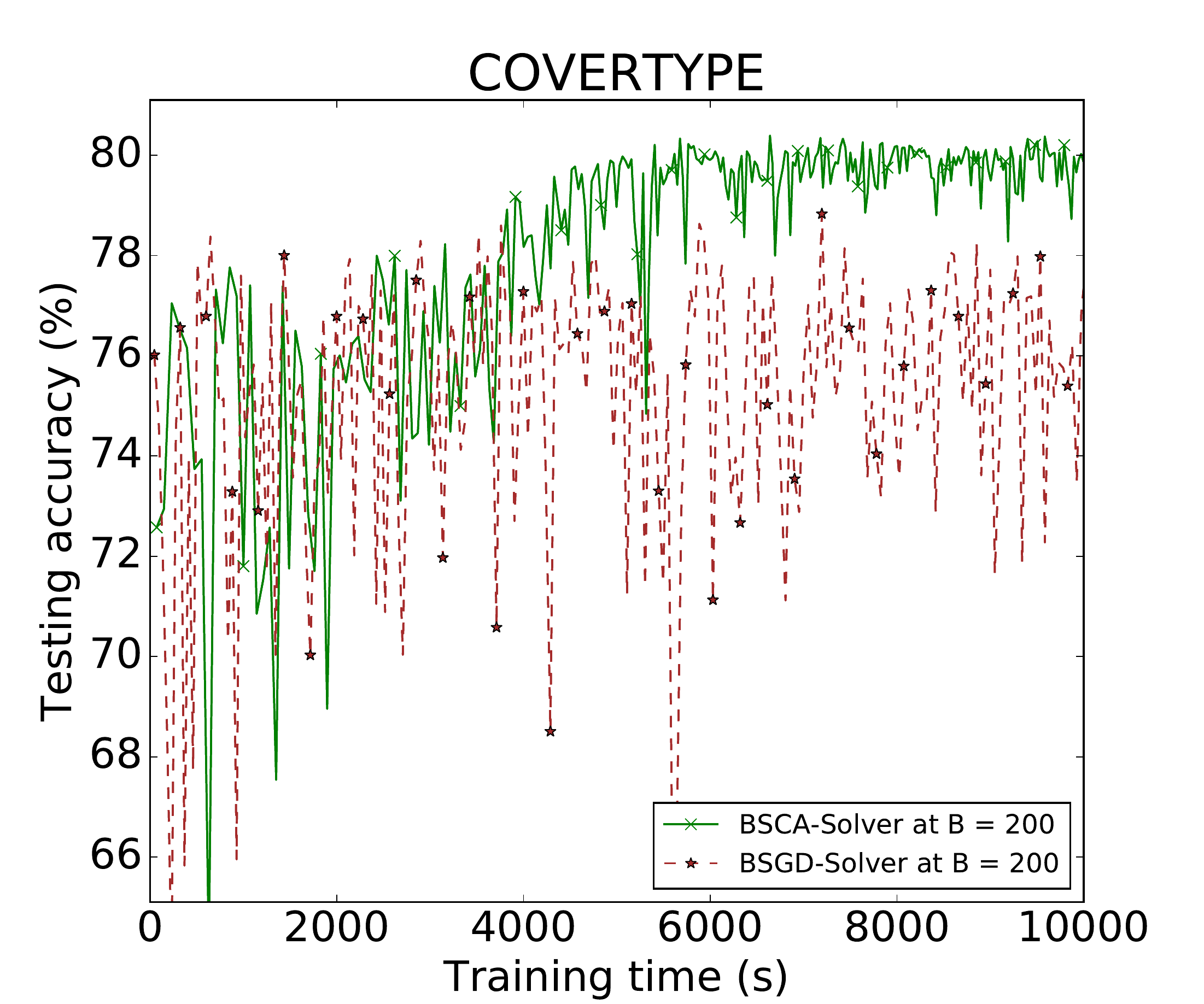}
	\includegraphics[width=0.32\columnwidth]{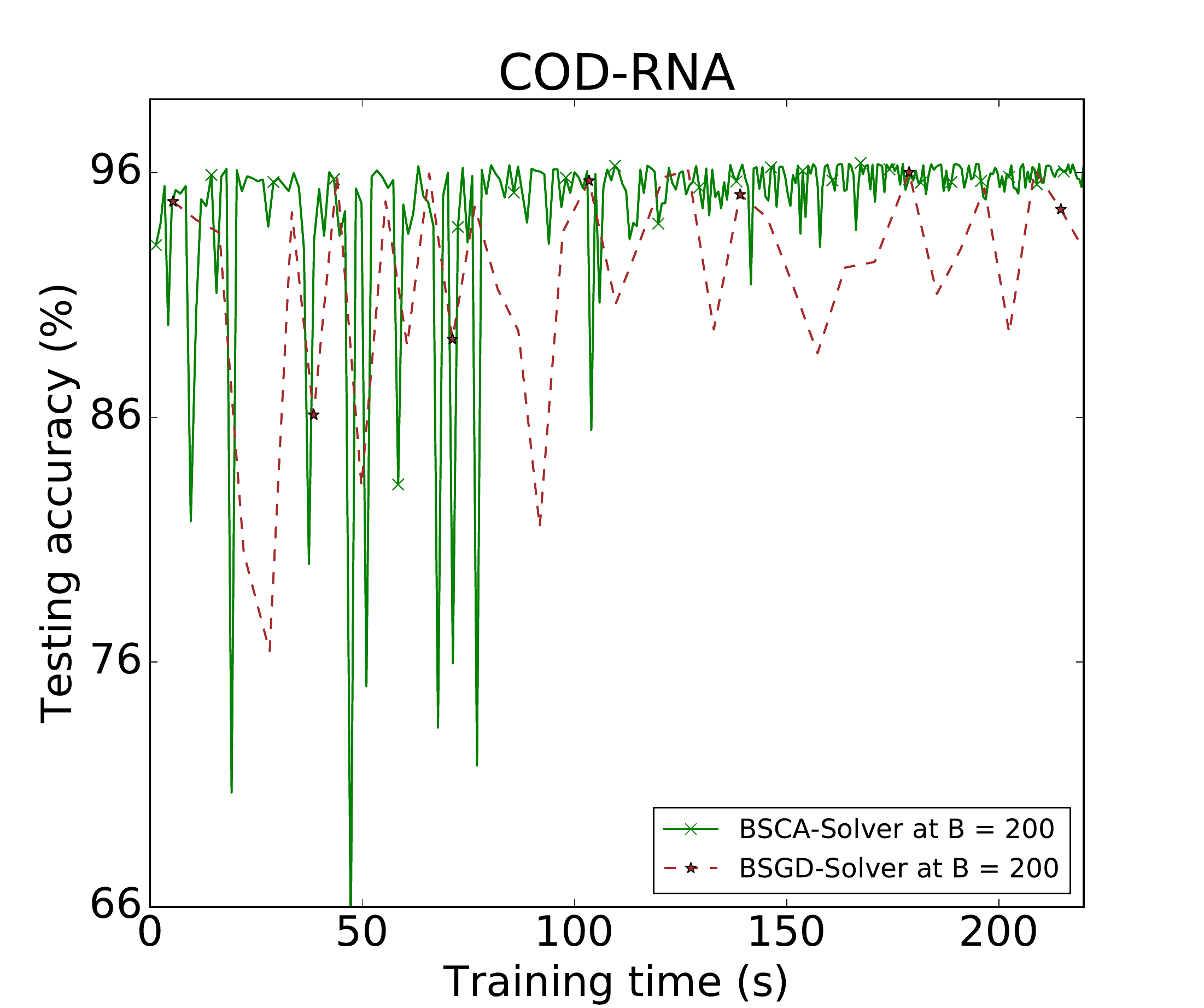}
	\includegraphics[width=0.32\columnwidth]{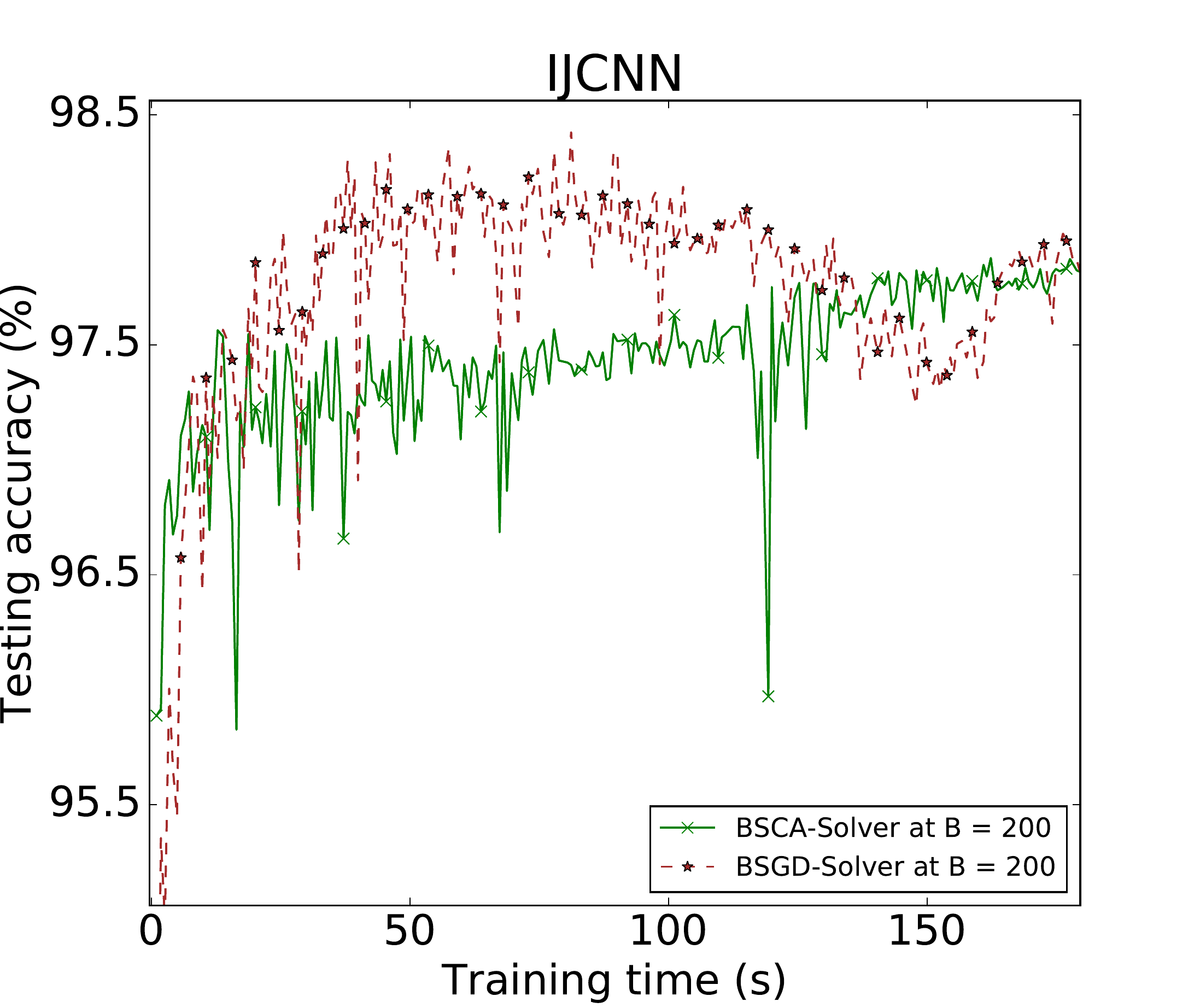}
	\includegraphics[width=0.32\columnwidth]{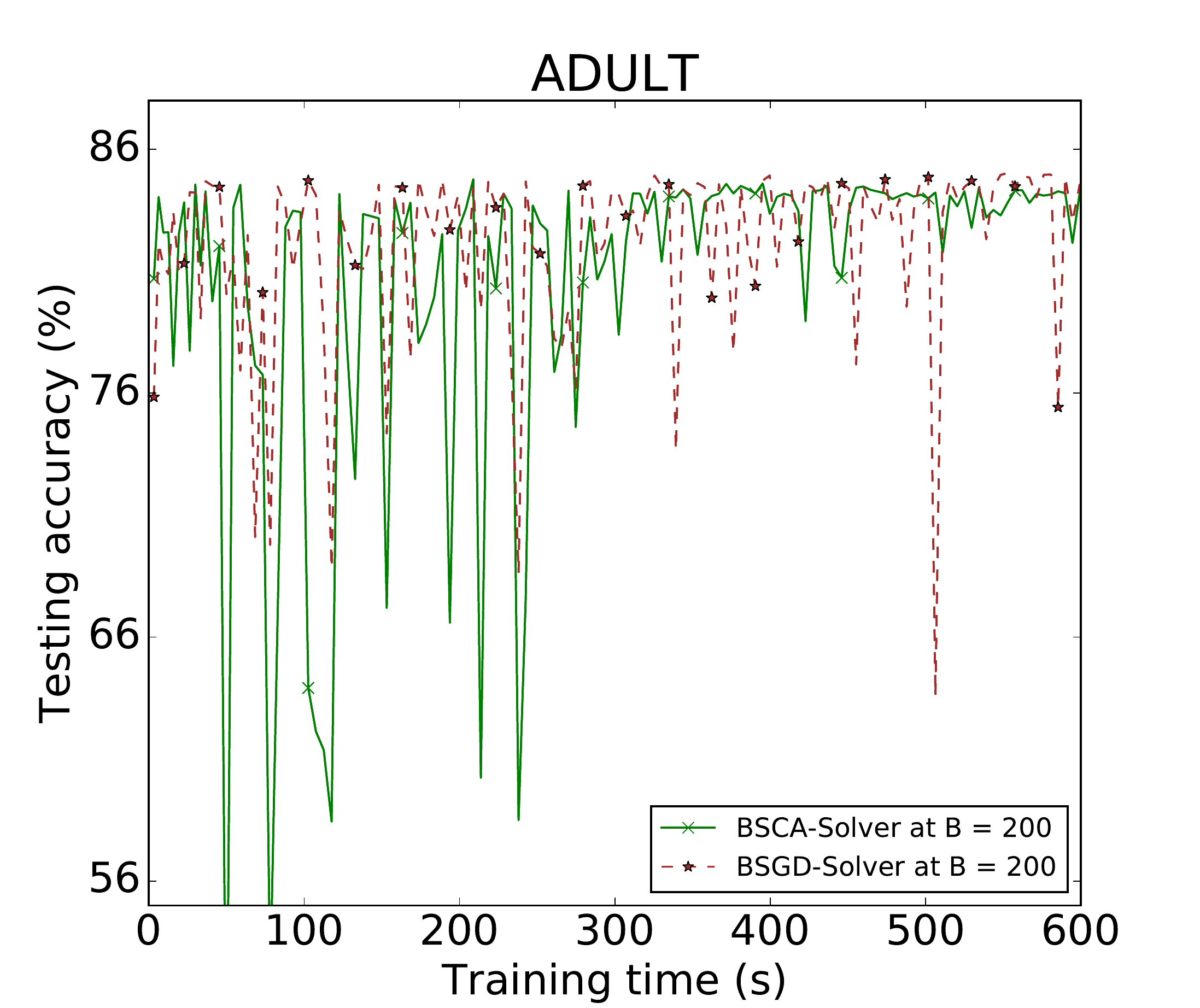}
\end{center}
\caption{
	Test accuracy results for BSCA and BSGD at a budget of $200$.
	\label{figure:bsca_bsgd_200}
}
\end{figure}
\begin{figure}[h]
\begin{center}
	\includegraphics[width=0.32\columnwidth]{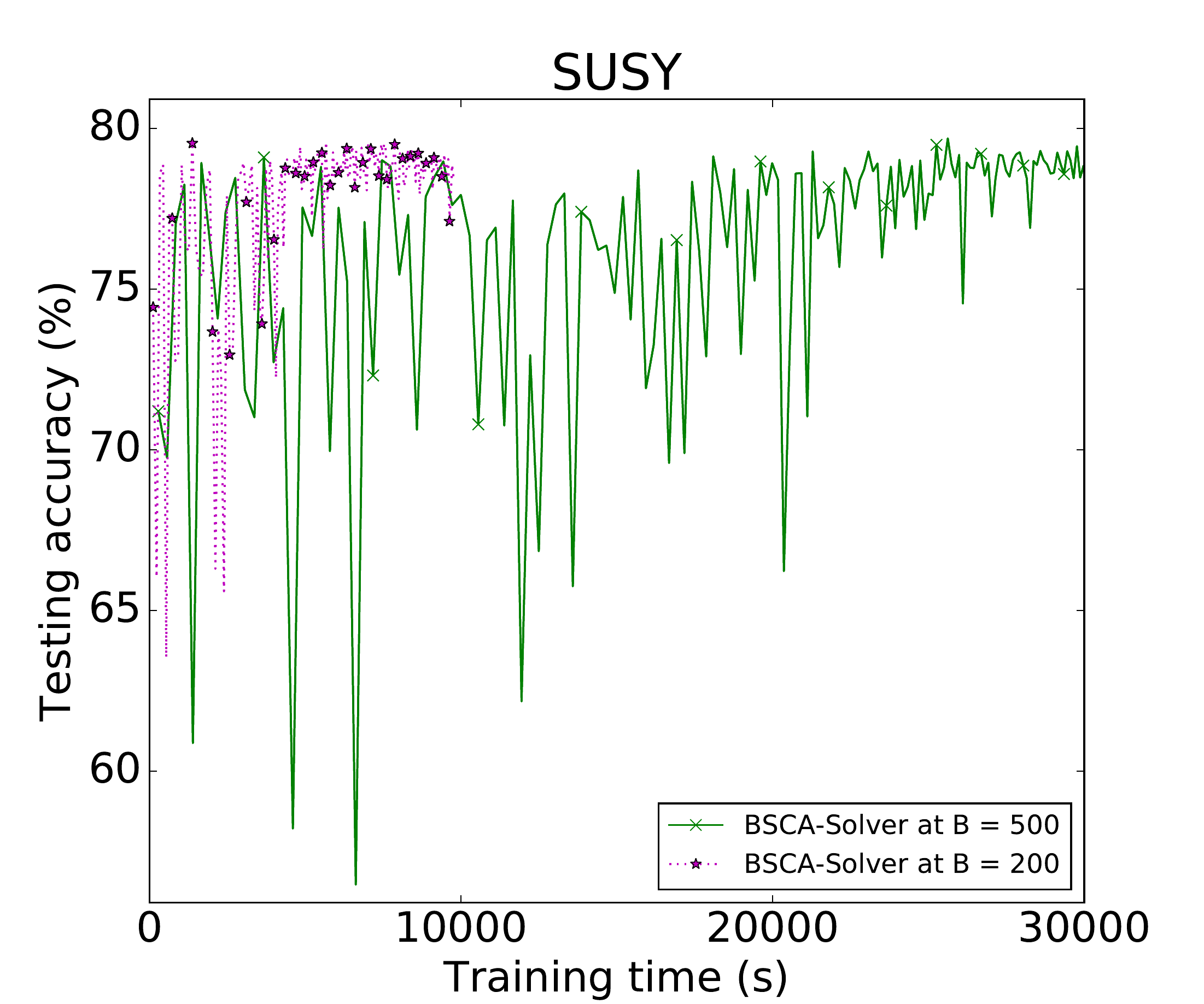}
	\includegraphics[width=0.32\columnwidth]{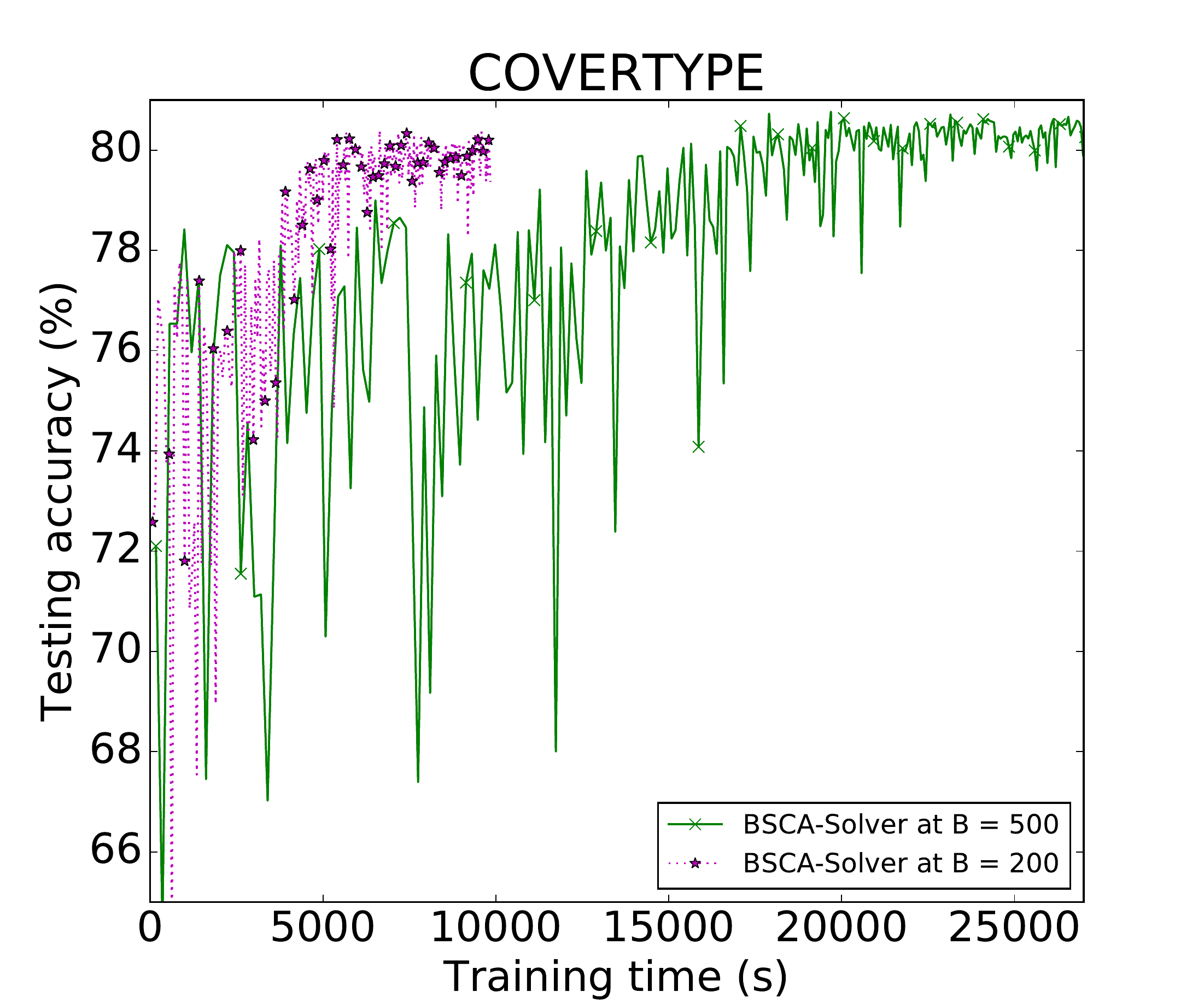}
	\includegraphics[width=0.32\columnwidth]{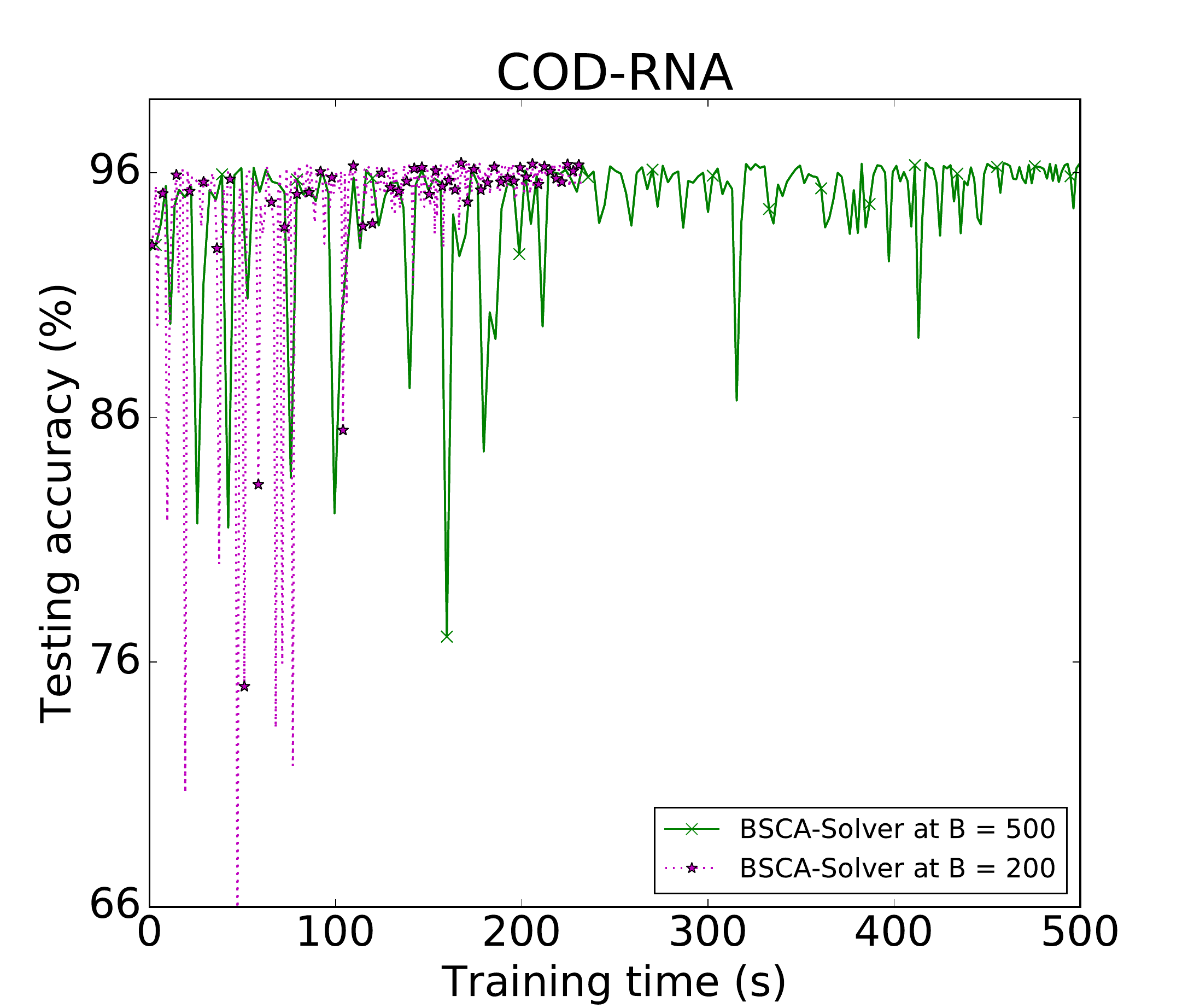}
	\includegraphics[width=0.32\columnwidth]{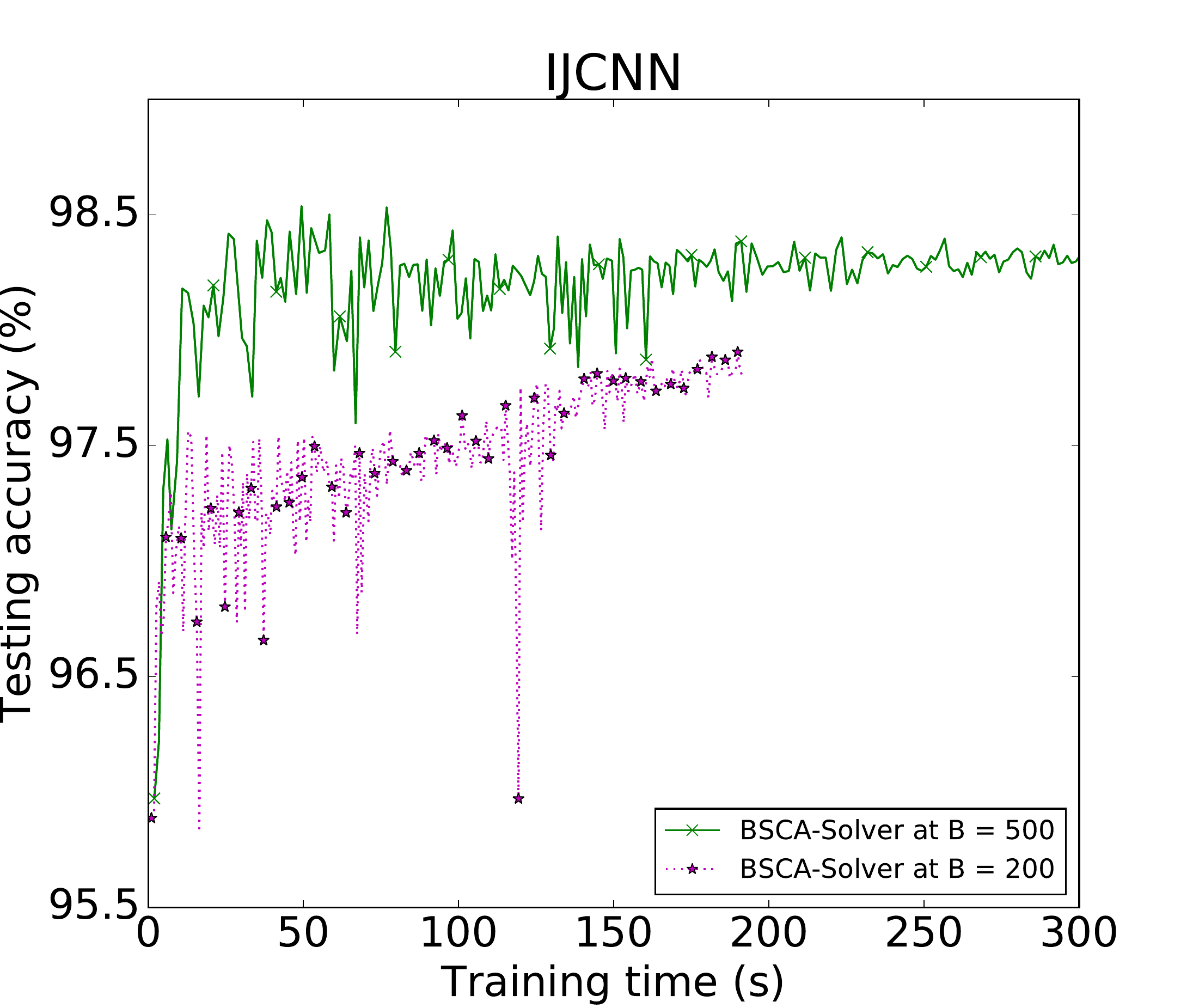}
	\includegraphics[width=0.32\columnwidth]{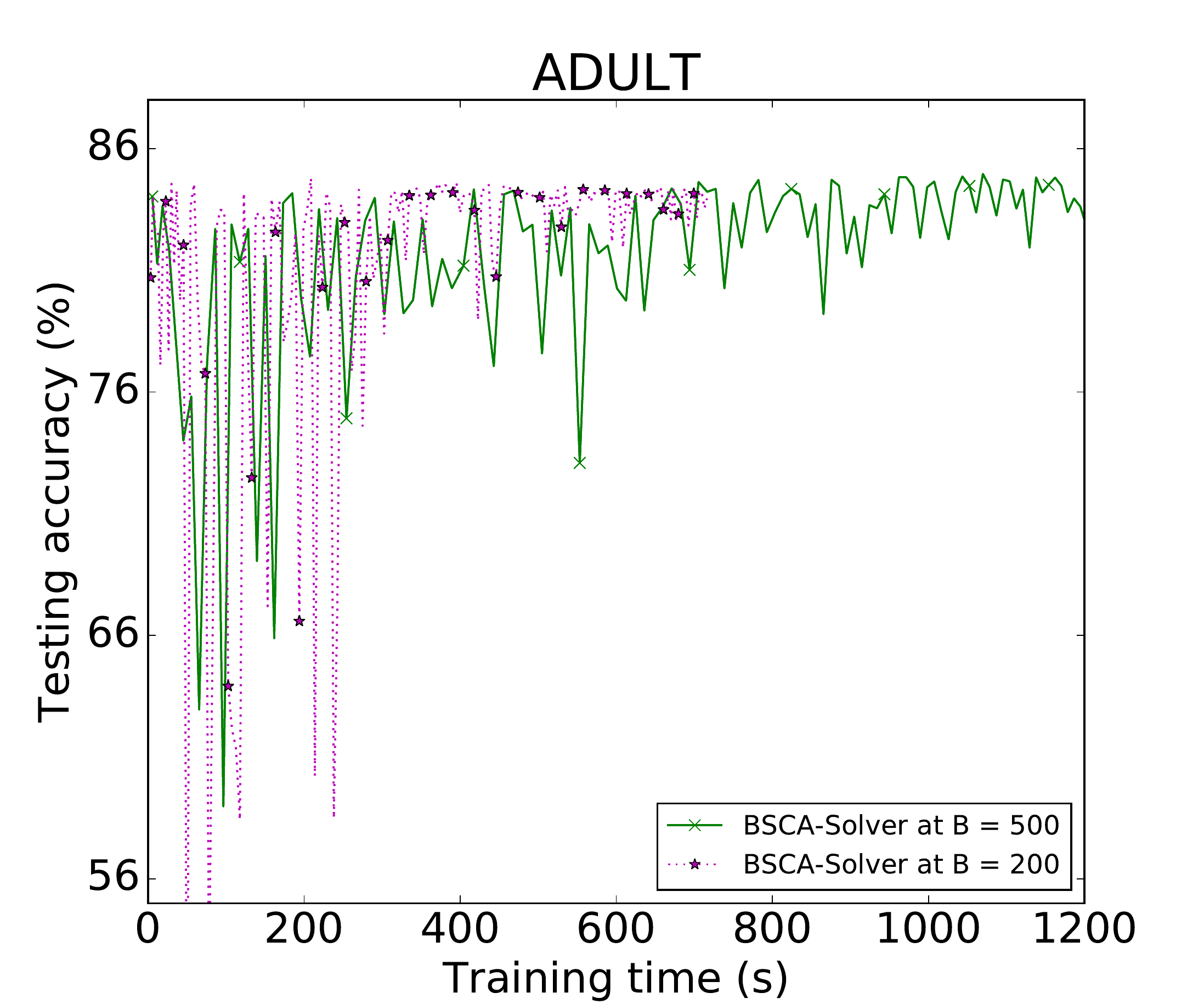}
\end{center}
\caption{
	Test accuracy over time for BSCA at budgets $B \in \{200, 500\}$.
	\label{figure:bsca_bsgd_te_200_500}
}
\end{figure}
\begin{figure}[h]
\begin{center}
	\includegraphics[width=0.32\columnwidth]{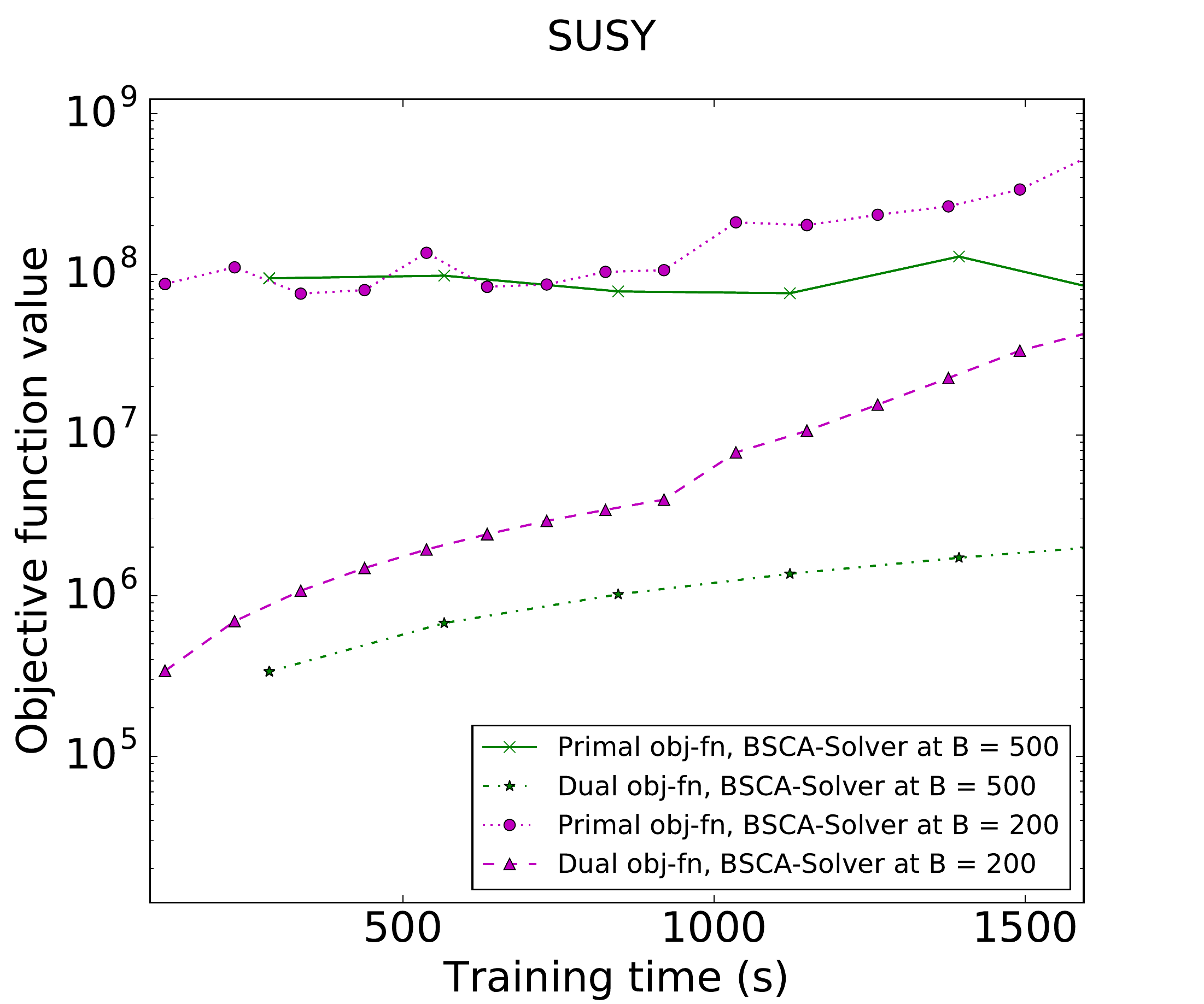}
	\includegraphics[width=0.32\columnwidth]{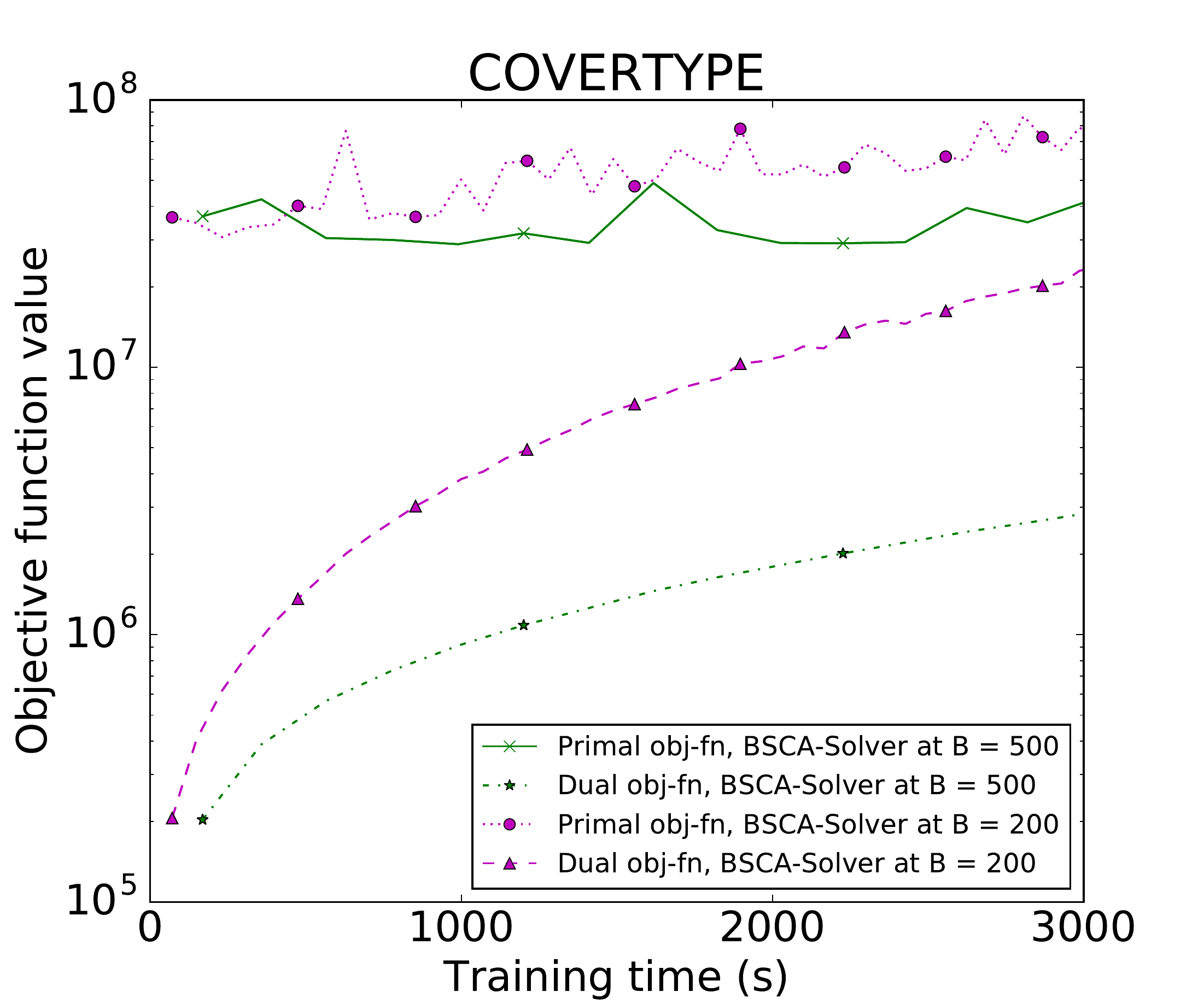}
	\includegraphics[width=0.32\columnwidth]{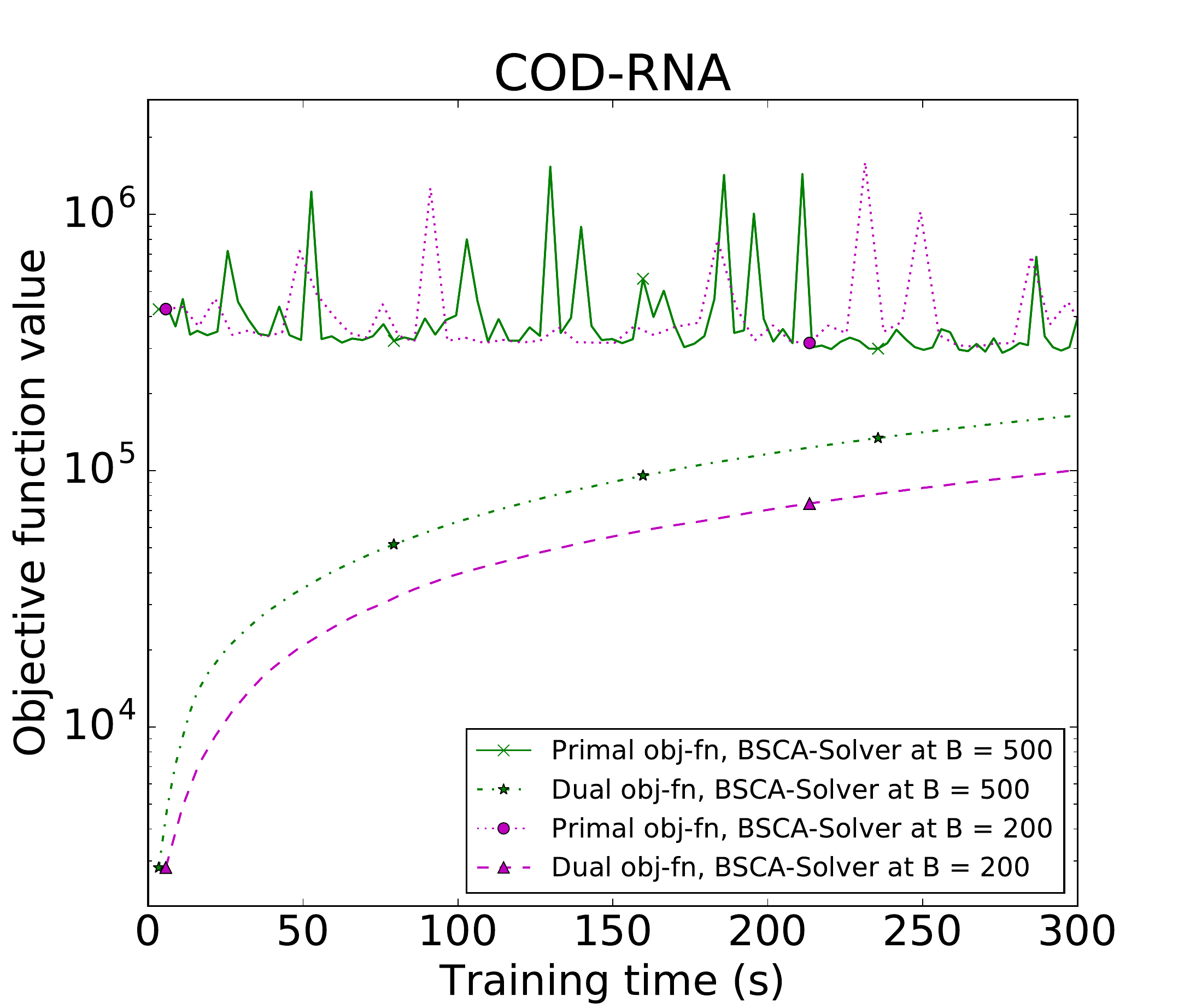}
	\includegraphics[width=0.32\columnwidth]{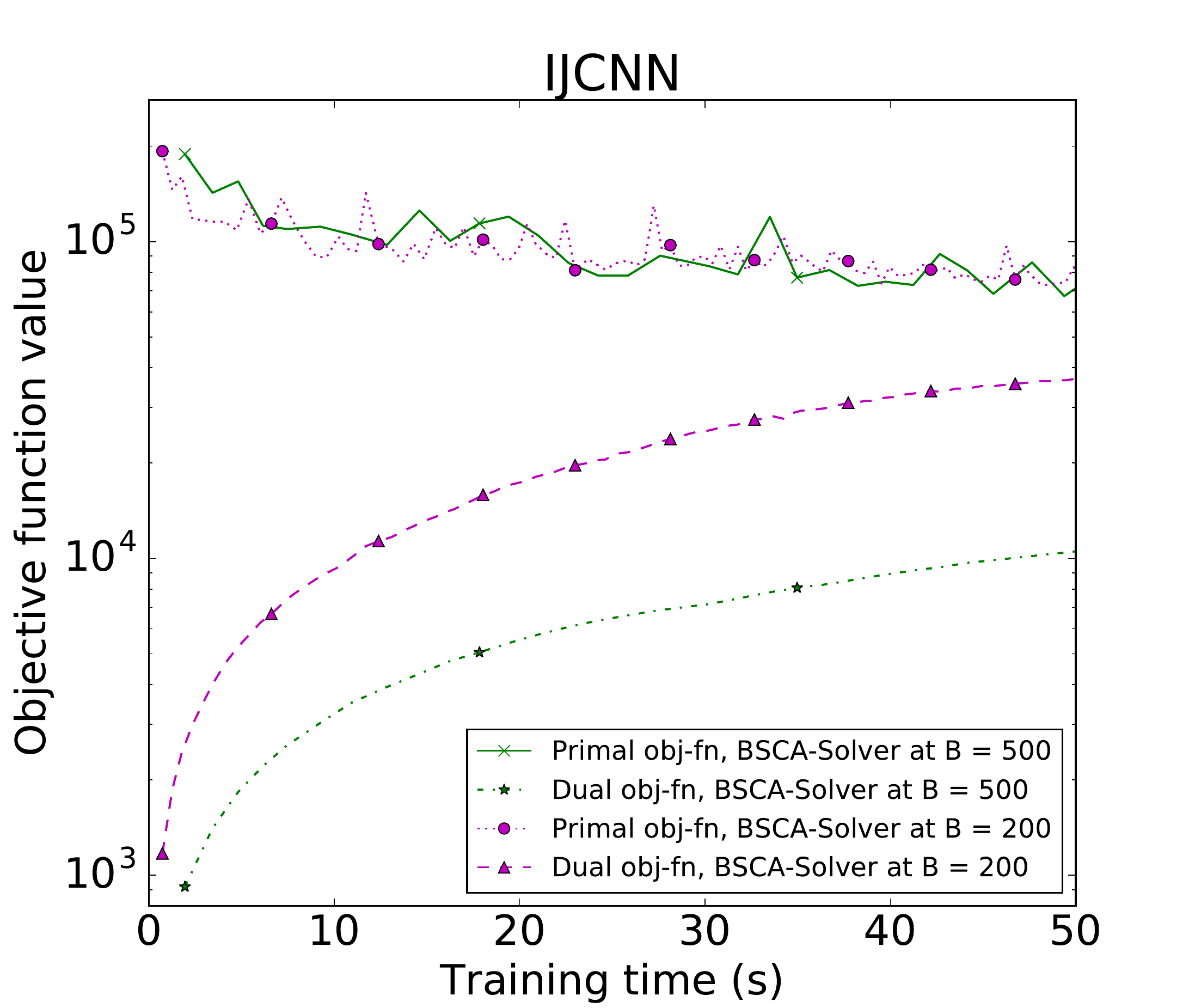}
	\includegraphics[width=0.32\columnwidth]{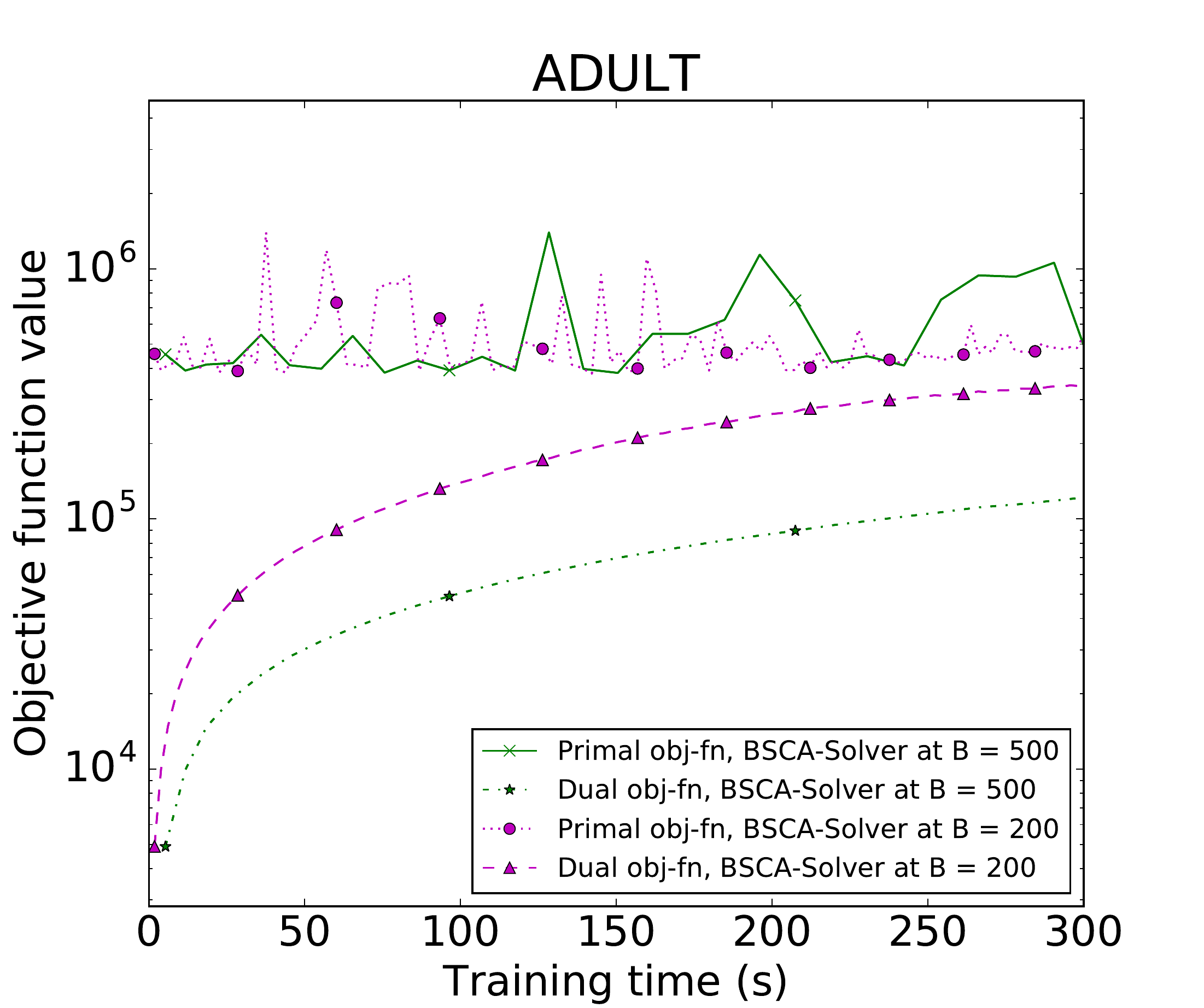}	
\end{center}
\caption{
	Primal and dual objective function over time for BSCA at budgets $B \in \{200, 500\}$.
	\label{figure:bsca_bsgd_objfn_200_500}
}
\end{figure}

\section{Appendix C: Merging Steps}

Figure~\ref{figure:mergingFraction} in the main paper shows the fraction
of merging steps for the COD-RNA problem.
Figure~\ref{figure:mergingFractionSup} provides the same data for the
remaining data sets, with very similar results.
\begin{figure}[hbt!]
\begin{center}
	\includegraphics[width=0.45\columnwidth]{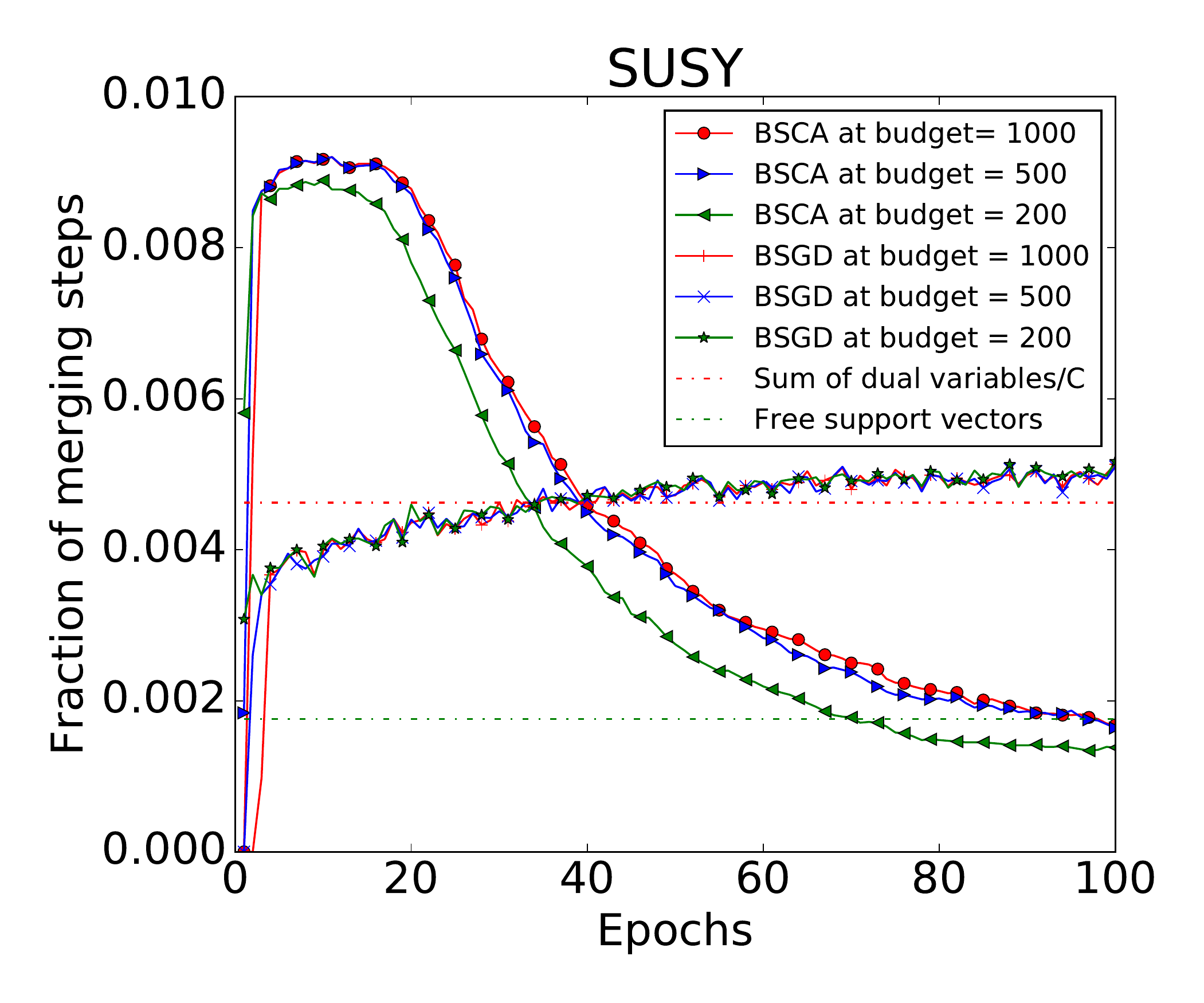}
	\includegraphics[width=0.45\columnwidth]{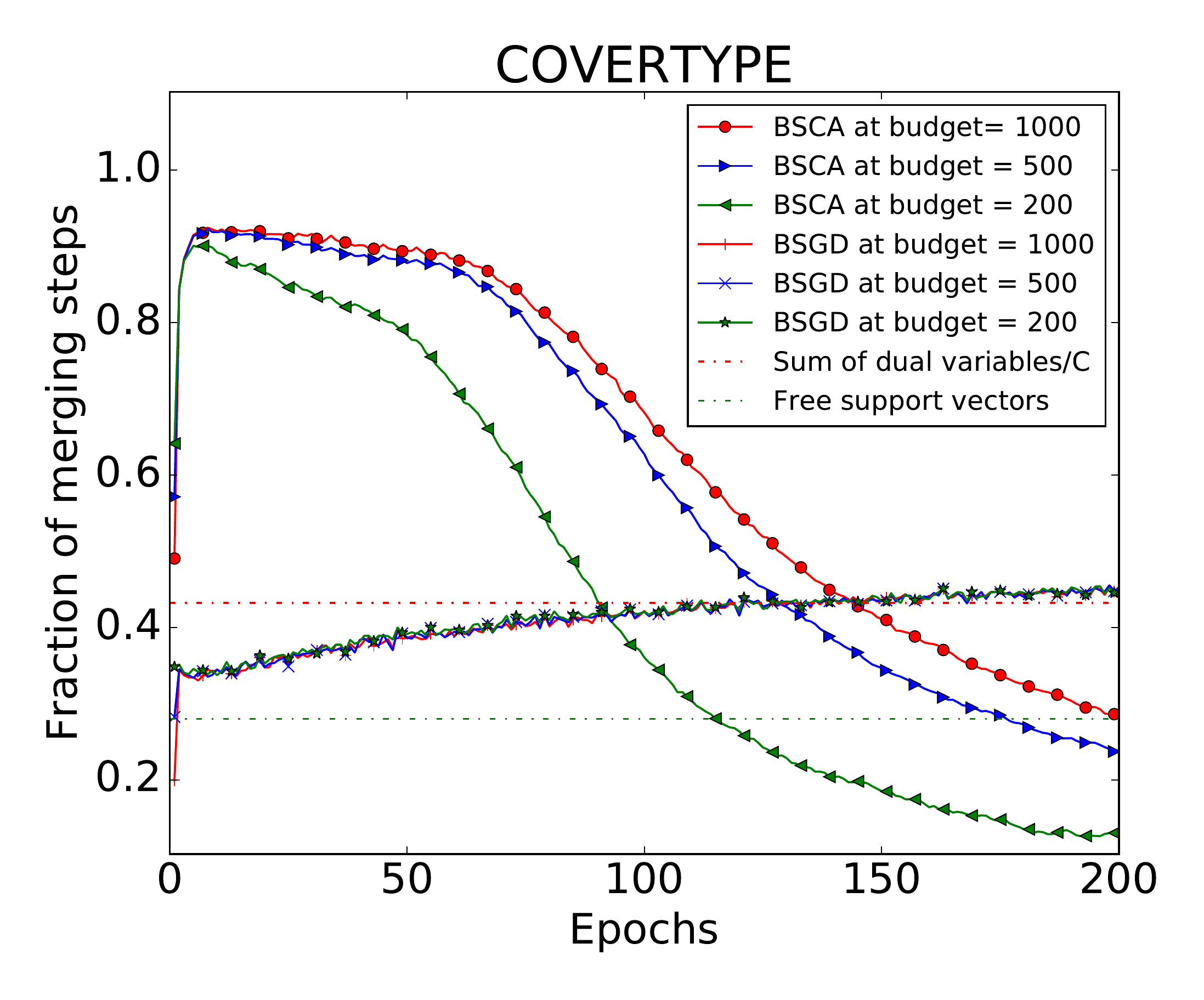}
	\includegraphics[width=0.45\columnwidth]{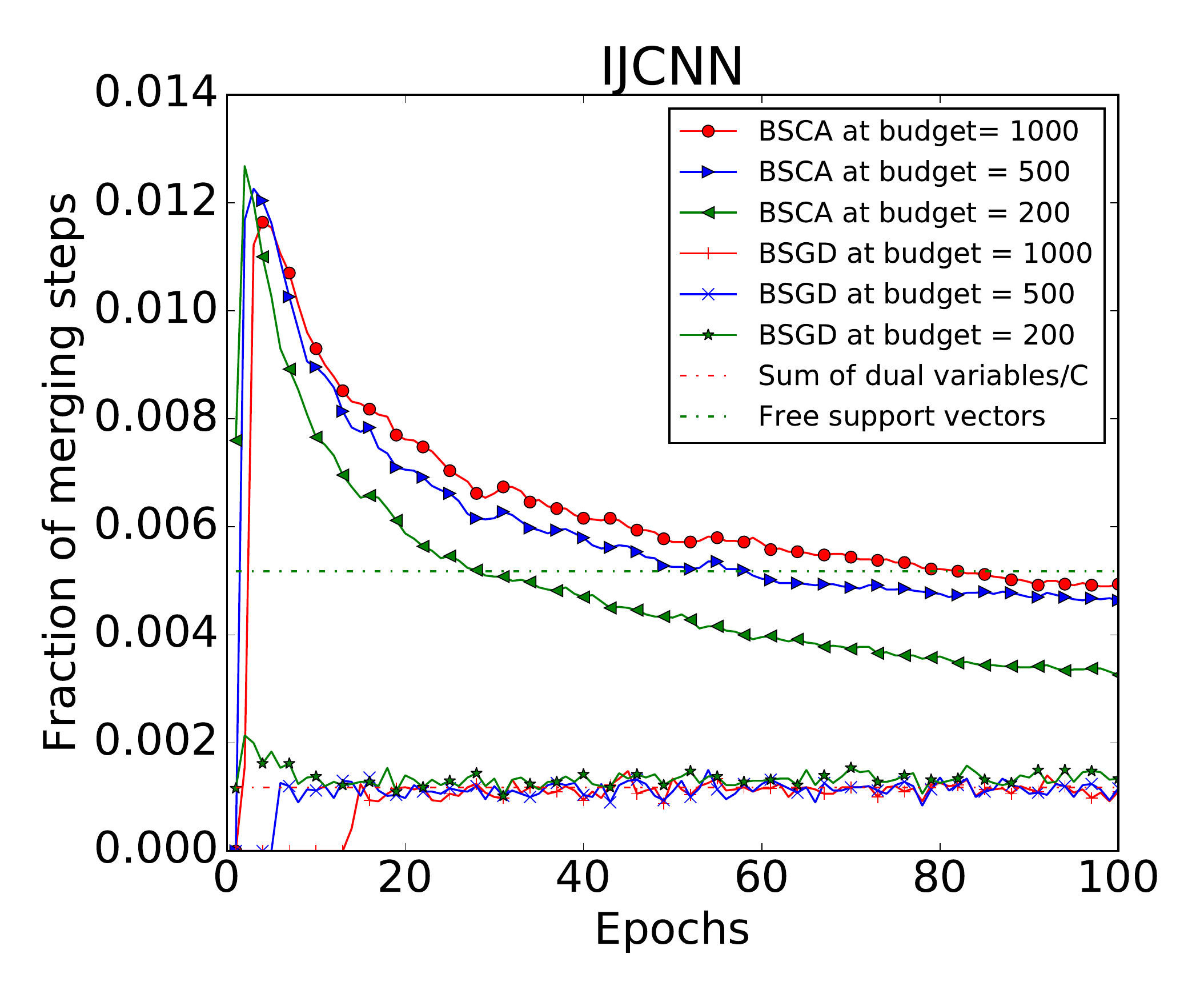}
	\includegraphics[width=0.45\columnwidth]{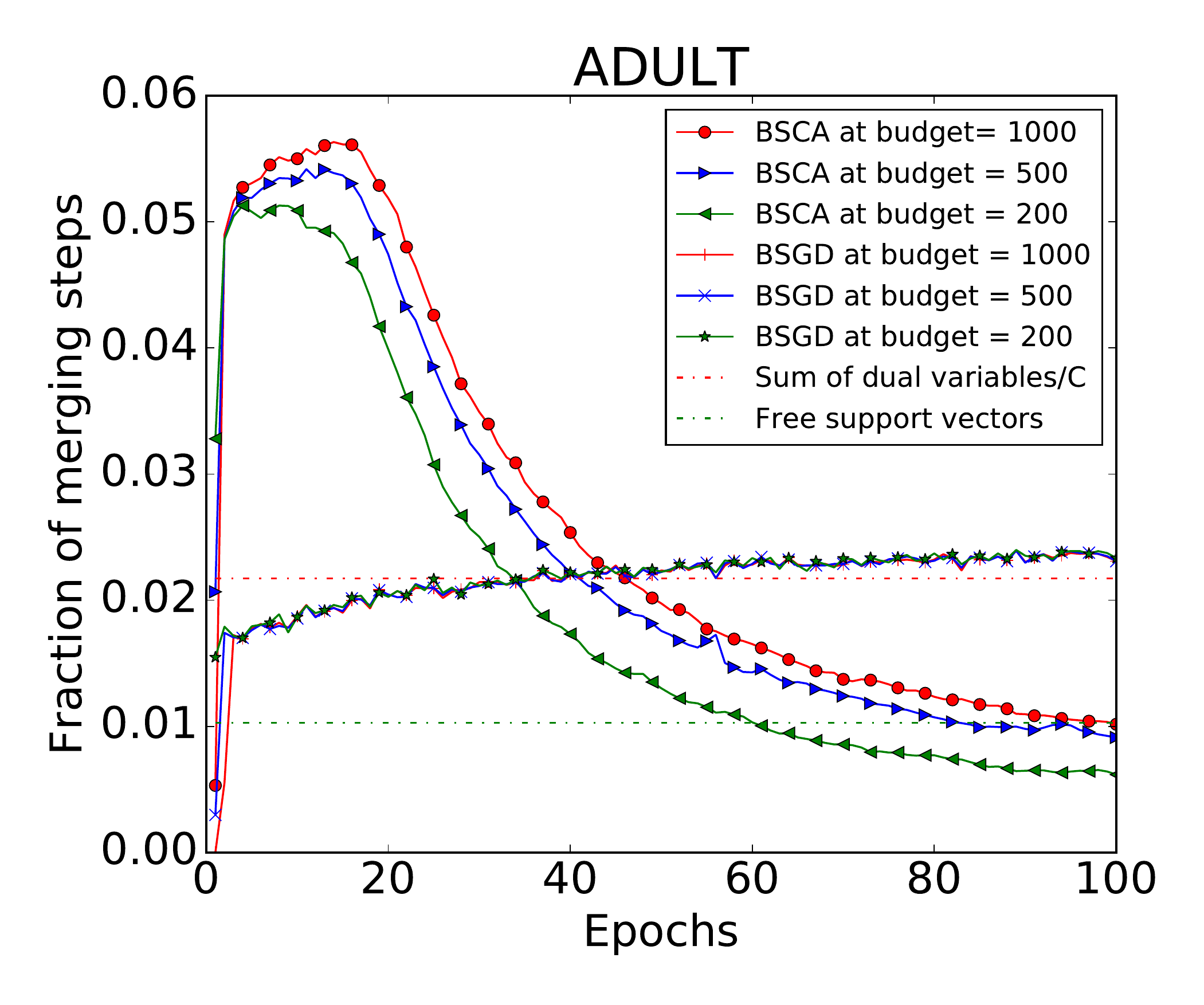}
\end{center}
\caption{
	Fraction of merging steps over a large number of epochs at budgets
	$B \in \{200, 500, 1000\}$.
	\label{figure:mergingFractionSup}
}
\end{figure}

\end{document}